\pdfoutput=1
\documentclass{article}
\PassOptionsToPackage{numbers}{natbib}
\usepackage[preprint]{neurips_2023}

% If you use BibTeX in apalike style, activate the following line:
%\bibliographystyle{apalike}
\PassOptionsToPackage{numbers}{natbib}

\usepackage[utf8]{inputenc} % allow utf-8 input
\usepackage[T1]{fontenc}    % use 8-bit T1 fonts
\usepackage{hyperref}
\usepackage{url}            % simple URL typesetting
\usepackage{booktabs}       % professional-quality tables
\usepackage{amsfonts}       % blackboard math symbols
\usepackage{nicefrac}       % compact symbols for 1/2, etc.
\usepackage{microtype}      % microtypography
\usepackage{xcolor}         % colors

% \usepackage[textsize=tiny]{todonotes}

%%%%%%%%%%%%%%%%%%%%%%%%%%%%%%%%
% BEGIN Author Macro
%%%%%%%%%%%%%%%%%%%%%%%%%%%%%%%%

% For theorems and such
\usepackage{amsmath}
\usepackage{amssymb}
\usepackage{mathtools}
\usepackage{amsthm}

\usepackage[capitalise]{cleveref}

\usepackage{bm,bbm}
\usepackage{xspace}
\usepackage{multirow}
\usepackage{cancel}
\usepackage[normalem]{ulem}
\usepackage{enumitem}
\usepackage{caption}
\usepackage{booktabs}

\usepackage{bm}

\usepackage{subcaption}
\usepackage{graphicx}
\usepackage{threeparttable, array, float}

%%%%%%%%%%%%%%%%%%%%%%%%%%%%%%%%
% ALGORITHMS
%%%%%%%%%%%%%%%%%%%%%%%%%%%%%%%%

\usepackage{algorithm}
\usepackage[noend]{algpseudocode}

\algnewcommand{\LeftComment}[1]{\Statex \(\triangleright\) \texttt{#1}}
\algnewcommand{\RightComment}[1]{\Statex \leavevmode\hfill\(\triangleright\) \texttt{#1}}

%%%%%%%%%%%%%%%%%%%%%%%%%%%%%%%%
% THEOREMS
%%%%%%%%%%%%%%%%%%%%%%%%%%%%%%%%
% \theoremstyle{plain}
% \newtheorem{theorem}{Theorem}[section]
% \newtheorem{proposition}[theorem]{Proposition}
% \newtheorem{lemma}[theorem]{Lemma}
% \newtheorem{corollary}[theorem]{Corollary}
% \theoremstyle{definition}
% \newtheorem{definition}[theorem]{Definition}
% \newtheorem{assumption}[theorem]{Assumption}
% % \theoremstyle{remark}
% \newtheorem{remark}[theorem]{Remark}

% \newtheorem{claim}{Claim}
% %\numberwithin{claim}{theorem} %% <-- This is another alternative if you like little difference.
% \usepackage{etoolbox}
% \AtEndEnvironment{proof}{\setcounter{claim}{0}}

\newtheorem{theorem}{\bf Theorem}
\newtheorem{lemma}{\bf Lemma}
\newtheorem{proposition}{Proposition}

\newtheorem{condition}{\bf Condition}

\Crefname{condition}{Condition}{Conditions}
\Crefname{proposition}{Proposition}{Proposition}

%%%%%%%%%%%%%%%%%%%%%%%%%%%%%%%%
% NOTATIONS
%%%%%%%%%%%%%%%%%%%%%%%%%%%%%%%%

\usepackage{letltxmacro}
\LetLtxMacro{\originaleqref}{\eqref}
\renewcommand{\eqref}{Eq.~\originaleqref}

\DeclareMathOperator*{\argmax}{arg\,max}
\DeclareMathOperator*{\argmin}{arg\,min}

\DeclarePairedDelimiter\abs{\lvert}{\rvert}%
\DeclarePairedDelimiter{\ceil}{\lceil}{\rceil}

\newcommand{\UCB}{\text{UCB}}

\newcommand{\upbra}[1]{^{(#1)}}

\renewcommand{\ge}{\geqslant}
\renewcommand{\le}{\leqslant}

\newcommand{\matob}{\texttt{CMA2B}\xspace}

\newcommand{\aas}{{AAS}\xspace}
\newcommand{\tas}{{TAS}\xspace}

%%%%%%%%%%%%%%%%%%%%%%%%%%%%%%%%
% WRITING
%%%%%%%%%%%%%%%%%%%%%%%%%%%%%%%%
\usepackage[colorinlistoftodos,prependcaption,textsize=tiny,textwidth=0.15\columnwidth]{todonotes}

% Macro for full vs. short version
\newcommand{\compilefullversion}{true} %Compile Full Version
\ifthenelse{\equal{\compilefullversion}{false}}{%
	\newcommand{\OnlyInFull}[1]{}
	\newcommand{\OnlyInShort}[1]{#1}
}{
	\newcommand{\OnlyInFull}[1]{#1}%
	\newcommand{\OnlyInShort}[1]{}%
}

\usepackage{fontawesome5}

\usepackage[switch]{lineno}

\title{Adversarial Attacks on Cooperative Multi-agent Bandits}
\author{
Jinhang Zuo$^{1,2}$\thanks{Equal contribution.} \quad Zhiyao Zhang$^{3*}$ \quad Xuchuang Wang$^{4*}$ \quad Cheng Chen$^{5}$ \quad\textbf{Shuai Li}$^{6}$\\ \textbf{John C.S. Lui}$^{4}$ \quad
\textbf{Mohammad Hajiesmaili}$^{2}$ \quad \textbf{Adam Wierman}$^{1}$\\
$^{1}$California Institute of Technolog\quad
$^{2}$University of Massachusetts Amherst\\
$^{3}$Southeast University\quad
$^{4}$The Chinese University of Hong Kong\\
$^{5}$East China Normal University\quad
$^{6}$Shanghai Jiao Tong University
}

\begin{document}

\maketitle
\begin{abstract}
Cooperative multi-agent multi-armed bandits (\matob) consider the collaborative efforts of multiple agents in a shared multi-armed bandit game. We study latent vulnerabilities exposed by this collaboration and consider adversarial attacks on a few agents with the goal of influencing the decisions of the rest. More specifically, we study adversarial attacks on \matob in both homogeneous settings, where agents operate with the same arm set, and heterogeneous settings, where agents have distinct arm sets. In the homogeneous setting, we propose attack strategies that, by targeting just one agent, convince all agents to select a particular target arm $T-o(T)$ times while incurring $o(T)$ attack costs in $T$ rounds. In the heterogeneous setting, we prove that a target arm attack requires linear attack costs and propose attack strategies that can force a maximum number of agents to suffer linear regrets while incurring sublinear costs and only manipulating the observations of a few target agents. Numerical experiments validate the effectiveness of our proposed attack strategies.
\end{abstract}

\section{Introduction}\label{sec:intro}
% CMA2B
% Attacks
% Challenges in homo, hetero
Cooperative multi-agent multi-armed bandits (\matob) have been widely studied in recent years~\citep{bistritz2018distributed,boursier2019sic,yang2021cooperative,wang2023achieve}. In \matob, \(M\in\mathbb{N}^+\) agents cooperatively play multi-armed bandits with \(K\in\mathbb{N}^+\) arms in a sequential manner. 
In each decision round, each agent picks one arm and observes a reward sample drawn from a stochastic distribution associated with the pulled arm.
Their cooperative objective is to maximize their total cumulative rewards in \(T\in\mathbb{N}^+\) decision rounds, or minimize their total regret---the difference between the total expected rewards of all agents constantly pulling the optimal arm and the actual expected rewards of the considered algorithm.

Leveraging cooperation between agents, \matob algorithms can achieve an improved total regret of $O(K \log T)$~\citep{wang2023achieve}, compared with a total regret of $O(MK \log T)$ if no cooperation is used.
% when each agent individually operates its bandit algorithms. 
However, a security caveat arises when some agents occasionally get unreliable observations that may have been tampered with by malicious attackers. This concern becomes more serious in large-scale multi-agent systems, where assuring consistent reliability of every agent's observations becomes increasingly intricate. 
%Given the collaborative nature of \matob, those agents compromised by such adversarial attacks have the potential to not only influence the performance of other agents but also jeopardize the entire learning system. 
Given the collaborative nature of \matob, such adversarial attacks have the potential to not only influence the performance of their target agents, but also affect other agents in the same learning system.

%The vulnerability of \matob to adversarial attacks means that it is crucial to devise robust defense strategies to ensure the trustworthiness and resilience of multi-agent systems.
%\xc{the above sentence seems out of our story. How about just remove it?}
Adversarial attacks on single-agent bandits have been recently studied in~\citep{jun2018adversarial,liu2019data}. In single-agent bandits, a successful attack means convincing the agent to pull a target arm a nearly linear number of times (i.e., \(T-o(T)\)) via manipulation of the agent's reward observations while only incurring a sublinear attack cost (i.e., \(o(T)\)). In contrast, in the multi-agent settings, the definition of a successful attack may vary. It might involve misleading a single agent or all agents, and the manipulations could target one agent or span across all agents. In this paper, we consider a challenging attack objective: \textit{affecting the maximum number of agents via attacking the reward observations of only a small subset of agents}.

Pursuing this objective, we discuss adversarial attacks in both homogeneous and heterogeneous \matob contexts. In the homogeneous settings, where each agent has access to all $K$ arms, it is feasible to convince all agents to choose a specific target arm in most time slots (if the target is a suboptimal arm, agents suffer linear regret). However, the number of agents that must be attacked and the overall cost of these attacks remain uncertain (addressed in Section~\ref{sec:homo}). Conversely, in the heterogeneous settings, where agents have different subsets of arms, the goal of directing all agents to select a single target arm becomes unattainable, especially if some agents lack access to this arm. As we elaborate in \Cref{sec:hetero}, even the task of convincing a subset of agents to select a target arm could require linear attack costs. Consequently, we shift our attack objective towards inducing the greatest number of agents to suffer linear regret.  This objective leads to three new challenges. First, determining the largest group of affected agents experiencing linear regret while incurring only sublinear costs remains an unresolved issue. Second, it is unclear how to select a small number of target agents for attack. Third, for these chosen target agents, we need to design attack strategies that can effectively influence other agents. 
% Those challenges are unique in multi-agent settings and thus cannot be handled by  

\textbf{Our Contributions}.
In this paper, we provide an in-depth study of adversarial attacks on \matob. In the homogeneous setting, we propose attack strategies that can convince all agents to select a designated target arm a linear number of times by attacking only a single agent with sublinear attack costs, revealing the inherent vulnerability of homogeneous \matob algorithms.  In the heterogeneous settings, we demonstrate that the target arm attack may demand linear attack costs, and propose attack strategies that can compel a significant number of agents to experience linear regret by manipulating the observations of only a few agents with sublinear attack costs.
%In the heterogeneous setting, we evaluate the feasibility of various attack objectives, eventually narrowing our focus to misleading the maximum number of agents into experiencing linear regret, while maintaining sublinear attack costs. To accomplish this, we introduce the Affected Agents Selection (AAS) algorithm, designed to identify the \xc{approximately} largest set of agents that can be simultaneously impacted using sublinear attack costs. Subsequently, we present the Target Agents Selection (TAS) algorithm, which finds a small set of target agents that should be attacked. Building on this, we devise attack strategies tailored to both known and unknown environments and provide detailed analyses of their attack costs.
Our technical contributions are summarized below.
%\begin{enumerate}
%    \item We devise attack strategies targeting a single agent to effectively misguide all agents in homogeneous \matob and prove their attack costs are $O(K \log T)$, independent of the number of agents $M$.
%    \item We formally prove the necessity of linear costs to realize the attack of the target arm in heterogeneous \matob. We provide a criterion to determine whether two agents can be concurrently misled, suffering linear regret, with sublinear attack costs.
%    \item We introduce the Affected Agents Selection (AAS) algorithm that identifies the largest set of affected agents in heterogeneous \matob, ensuring a $(1-1/e)-$approximation guarantee. Furthermore, we design the Target Agents Selection (TAS) algorithm to choose target agents for attack, maximizing the influence over other agents.
%    \item Based on the AAS and TAS algorithms, we propose the Oracle Attack (OA) strategy tailored for known heterogeneous \matob environments, accompanied by a non-trivial analysis of the associated attack costs. We then adapt this strategy to unknown environments through the Learning-Then-Attack (LTA) strategy and offer its cost analysis. Both strategies retain the attack costs of $O(K \log T)$ as those in homogeneous settings.
%\end{enumerate}

\begin{itemize}
    \item In~\Cref{sec:homo}, for the homogeneous \matob setting, we devise attack strategies targeting a \emph{single} agent to effectively misguide \emph{all} agents and prove their attack costs are $O(K \log T)$, independent of the number of agents $M$.
    \item In~\Cref{subsec:conflict}, for the heterogeneous \matob setting, after illustrating several examples in which linear cost is necessary to fool the agents into suffering linear regret, we provide a criterion to determine whether two agents can be simultaneously misled with only sublinear attack costs.
    \item In~\Cref{subsec:oracle-attack}, we extend the above criterion to the Affected Agents Selection (AAS) algorithm that identifies the largest set of agents eligible to be affected, ensuring a $(1-1/e)-$approximation guarantee. Furthermore, we design the Target Agents Selection (TAS) algorithm to choose a small subset of agents for attack (target agents) that can influence all agents selected by AAS.
    \item Based on the AAS and TAS algorithms, we propose the Oracle Attack (OA) strategy tailored for known heterogeneous \matob environments, accompanied by an analysis of the associated attack costs.
    In~\Cref{subsec:learning-then-attack}, we then adapt this strategy to unknown environments via the Learning-Then-Attack (LTA) strategy and offer its cost analysis. Both strategies retain the costs of $O(K \log T)$ as those in homogeneous settings. 
\end{itemize}
In addition to the algorithmic and theoretical contributions, we conduct experiments to evaluate our proposed attack strategies. Due to the space limit, proofs and empirical results are included in the appendix.

\textbf{Related Work}.
There is a large literature of work focused on \matob, e.g.,~\citep{rosenski2016multi,bistritz2018distributed,boursier2019sic,wang2020optimal,wang2020distributed,shi2021heterogeneous,yang2021cooperative,yang2022distributed,wang2022multiple,wang2023achieve,wang2023explore} and the references therein.  However, only a few works have studied misinformation in \matob learning, i.e., ~\citep{boursier2020selfish,vial2021robust,madhushani2021one,dubey2020private}.
In these scenarios, there are either malicious/selfish agents (e.g., byzantine agents~\citep{dubey2020private}) sharing wrong information (e.g., false arm recommendation~\citep{vial2021robust}, wrong reward observations~\citep{boursier2020selfish}), or imperfect communication (e.g., adversarial corruption~\citep{madhushani2021one}), resulting in other agents failing to find the optimal arm.
Our work is the first to study how an attacker may manipulate multi-agent cooperative learning.

Regarding adversarial attacks on single-agent multi-armed bandit problems, a growing literature focuses on a variety of bandit settings, e.g., ~\citep{jun2018adversarial, garcelon2020adversarial, ma2023adversarial, zuo2023adversarial, liu2020action, wang2023adversarial}. Specifically, \citep{jun2018adversarial} pioneered the formulation of adversarial attack models for stochastic bandits. \citep{garcelon2020adversarial} delved into attacks on linear contextual bandits, while \citep{ma2023adversarial} examined adversarial bandits, providing a lower bound of cumulative costs in such environments. Notably, these investigations do not extend to multi-agent bandit scenarios. To the best of our knowledge, our research marks the first exploration of adversarial attacks on \matob.

\section{Preliminaries}\label{sec:model}
We consider a \matob consisting of \(K\in\mathbb{N}^+\) arms, denoted by an arm set $\mathcal{K} \coloneqq \{1,2,\cdots,K\}$,
and \(M\in\mathbb{N}^+\) agents, denoted by an agent set $\mathcal{M} \coloneqq \{1,2,\cdots,M\}$.
Each arm \(k\in\mathcal{K}\) is associated with a $\sigma^2$-sub-Gaussian reward distribution and \emph{unknown} mean $\mu(k)\in [0, b]$ where \(b \in \mathbb{R}^+\) is known a prior. 
We assume $\mu(1) > \mu(2) > \cdots > \mu(K)$ such that the minimal mean reward gap is positive, which was also adopted in prior multi-agent bandits studies (e.g., \citep{rosenski2016multi}).
Assume there are \(T\in\mathbb{N}^+\) decision rounds for \matob, denoted the round set by \(\mathcal{T} \coloneqq \{1,2,\dots,T\}.\) We consider both homogeneous and heterogeneous settings for cooperative multi-agent bandits.

\textbf{Homogeneous settings}. In the homogeneous setting, each agent can equally observe and select every arm in $\mathcal{K}$. In round \(t\in\mathcal{T}\), each agent selects an arm $k_t\upbra{m} \in \mathcal{K}$ and observes a reward $X_t\upbra{m,0}(k_t\upbra{m})$ with expectation $\mu(k_t\upbra{m})$, where the superscript \(\upbra{m,0}\) refers to the vanilla (pre-attack) reward observation on agent \(m\). These agents share their information about arms with each other. We use regret to measure the performance of a policy, defined as
\begin{equation}\label{eq:R_Homo}\textstyle
	R(T) \coloneqq MT\mu(1) - \sum_{m=1}^M\sum_{t=1}^T\mu(k_t\upbra{m}),
\end{equation}
which is the difference between the maximized accumulative reward (all agents keep pulling optimal arm \(1\)) and the concerned policy's total reward.
Note that we do not consider the collision (e.g.,~\citep{boursier2019sic}) here, which means different agents can select the same arm in the same round, and each of them gets an independent reward sample. All agents together aim to minimize the regret.

\textbf{Heterogeneous settings}. In heterogeneous settings, each agent $m \in \mathcal{M}$ has access only to a subset of arms $\mathcal{K}\upbra{m} \subset \mathcal{K}$.
Assume $\abs{\mathcal{K}\upbra{m}} > 1$ for each agent $m \in \mathcal{M}$ to exclude trivial cases.
For each agent \(m\), we denote the arm with the highest reward mean in $\mathcal{K}\upbra{m}$ as $k_*\upbra{m} = \argmax_{k\in\mathcal{K}\upbra{m}} \mu(k)$, called agent \(m\)'s \emph{local optimal arm}.
% Also, let $\mathcal{M}(k)$ be the agents whose subset contains arm $k$. Then, let $\mathcal{M}_*(k) \subset \mathcal{M}(k)$ be the agents $m$ which satisfy $k_*\upbra{m} = k$, and $\mathcal{M}(-k) = \mathcal{M}(k)/\mathcal{M}_*(k)$. The size of $\mathcal{M}(k)$, $\mathcal{M}_*(k)$, $\mathcal{M}(-k)$ is $M(k)$, $M_*(k)$, and $M(-k)$, respectively.
Similarly to the homogeneous setting, after agent $m$ selects arm $k_t\upbra{m}$ in round $t$, the environment also reveals a sub-Gaussian reward $X_t\upbra{m,0}(k_t\upbra{m})$ with expectation $\mu(k_t\upbra{m})$. Agents share their information with others. However, due to different local optimal arms \(k_*\upbra{m}\) benchmarks, the regret is defined differently:
\begin{equation}\label{eq:R_Heter}\textstyle
	R(T) = T\sum_{m=1}^M\mu(k_*\upbra{m}) - \sum_{m=1}^M\sum_{t=1}^T\mu(k_t\upbra{m}).
\end{equation}
% where we cannot count on all agents to select arm $1$ as their best optimal choice.

% Then, we move on the attack models. We introduce two attack models: observation attack and action attack. In homogeneous settings, attacker aims to force all $M$ agents to suffer linear regret. On the other hand, in heterogeneous settings, attacker aims to force at least one agent to suffer linear regret. In both settings, attacker also wants to use only sublinear attack cost.

\textbf{Threat model}.
In both the homogeneous and heterogeneous scenarios, agent $m$ selects an arm $k_t\upbra{m}$ from its respective arm set ($\mathcal{K}$ in homogeneous settings and $\mathcal{K}\upbra{m}$ in heterogeneous settings). The environment generates sub-Gaussian pre-attack reward feedback, denoted as $X_t\upbra{m,0}(k_t\upbra{m})$. We assume that there exists an \emph{attacker} who chooses a subset of agents, $\mathcal{D} \subseteq \mathcal{M}$, as the target to attack. It can observe pre-attack rewards from all agents, and manipulate those from $m \in \mathcal{D}$ into the post-attack reward $X_t\upbra{m}(k_t\upbra{m})$ before returning them to the agent.
It is worth noting that the agents are oblivious to the attacker's presence and rely on this post-attack reward for subsequent decision-making. In homogeneous settings, analogous to attacks in single-agent scenarios, the attacker attempts to force \emph{all} agents to pull a target arm for $T - o\left(T\right)$ times, incurring a cumulative attack cost of only 
\begin{equation*}
    C(T) \coloneqq \sum_{m\in\mathcal{D}}\sum_{t = 1}^T\left|X_t\upbra{m,0}(k_t\upbra{m}) - X_t\upbra{m}(k_t\upbra{m})\right| = o(T).
\end{equation*}
In heterogeneous settings, the attacker knows the local arm set $\mathcal{K}^{(m)}$ for every agent $m\in \mathcal{M}$, and its objective is to maximize the number of agents (affected agents) that suffer linear regret, as achieving the target arm objective with sublinear costs may not be feasible in this context. The detailed reason is discussed in \Cref{sec:hetero}.
\section{Attacks in Homogeneous Settings}\label{sec:homo}
\matob algorithms in homogeneous settings can be broadly classified into two categories~\citep{wang2023achieve}: fully distributed algorithms and leader-follower algorithms. The distinction between these algorithms lies in the presence or absence of a central agent (or server) that determines the actions of agents. In fully distributed algorithms, all agents participate in exploration and exploitation. Conversely, in leader-follower algorithms, the leader primarily manages exploration and plays a pivotal role. 
In this section, we focus on adversarial attacks against fully distributed algorithms due to the space limit. In the appendix, we also show that targeting only the leader is adequate for misleading leader-follower algorithms.

% \textbf{UCB-TCOM algorithm}.
% In \matob problems, agents cooperative with each other to minimize the cumulative regret. The way they cooperate is to share the information of local samples to their neighbors. 
% Sending information more frequently means that agents can have more knowledge about the bandit problem, however, there is also a communication cost when agents share their information, and the agents also need to decrease this cost at the same time. UCB-TCOM algorithm address this trade-off well as agents can achieve near-optimal individual regret, and they only need $O(\log(\log T))$ communication times. One key idea of UCB-TCOM is to share information after agent gets enough samples, rather than communicating immediately (as agents do in UCB-IC algorithm), so that it can reduce the cost of communication significantly.

% \jinhang{change to CO-UCB}
\textbf{Attacks against CO-UCB}.
Despite the abundance of fully distributed algorithms presented in the literature, for consistency with the heterogeneous settings in \cref{sec:hetero}, we choose Cooperative Upper Confidence Bound (CO-UCB)~\citep{yang2022distributed}, a representative \matob algorithm that functions effectively in both homogeneous and heterogeneous environments, as our attack target. In CO-UCB, in each round, each agent pulls the arm with the highest UCB index and shares its reward observation immediately with others. As we assume CO-UCB knows the range of the mean reward as a prior, it constrains all UCB indices to the interval $[0,b]$. Although our attack methodologies and analysis are crafted for CO-UCB, they can also be extended to other distributed algorithms. A detailed example of attacking UCB-TCOM~\citep{wang2023achieve}, the state-of-art homogeneous algorithm with efficient communication design, can be found in the appendix.
% When considering the UCB-IC algorithm in homogeneous settings, it can be viewed as the homogeneous counterpart of the CO-UCB algorithm. In this context, every agent has the capability to observe and select all $K$ arms. Assume arm $k$ is selected by all agents in round $t$, and the last time it is chosen is $t'$. Similar to attacks against UCB-TCOM algorithm, we can attack only one agent to achieve the target arm objective. We need to satisfy the following inequality to achieve such objective:
% \begin{equation}\label{equ:IC-inequality}
%     \begin{aligned}
%         \hat{\mu}_t \le \hat{\mu}_t(K) - 2\beta(\hat{n}_t(K)) - \Delta_0,
%     \end{aligned}
% \end{equation}
% where $\hat{\mu}_t(k) = \frac{\hat{\mu}_{t-1}(k)\hat{n}_{t-1}(k) + \sum_{m=1}^MX_t\upbra{m,0}(k)-\alpha(t)}{\hat{n}_{t}(k)}$. The entire process is in \cref{alg:IC}.

We first introduce some notations. Let $\hat{n}_t(k)$ denote the total number of times that arm $k$ is pulled by all $M$ agents globally up to time $t$, and let $\hat{\mu}_t(k)$ represent the \emph{post}-attack empirical mean associated with $\hat{n}_t(k)$. Without loss of generality, we choose the worst arm $K$ as the target arm, as it leads to the highest attack costs. Our goal is to mislead the agents running CO-UCB in order to convince them to pull the target arm $T - o(T)$ times with $o(T)$ attack costs.
In the homogeneous setting, we can achieve this goal by merely attacking a single agent. Intuitively, since agents consistently share their reward observations, the manipulated rewards from one agent are disseminated to the rest, influencing their choices. To this end, we select an arbitrary agent, $m$, to attack. In round $t$, if its chosen arm $k$ is not the target arm $K$, we manipulate its reward $X_t^{(m,0)}(k)$ to fulfill the following inequality:
\begin{equation}\label{equ:IC-inequality}
    \begin{aligned}
        \hat{\mu}_t(k) \le \hat{\mu}_t(K) - 2\beta(\hat{n}_t(K)) - \Delta_0,
    \end{aligned}
\end{equation}
where \[
\begin{split}
    \hat{\mu}_t(k)  = \frac{\hat{\mu}_{t-1}(k)\hat{n}_{t-1}(k) + \sum_{s=1}^MX_t\upbra{s,0}(k)-\gamma^{(m)}(t)}{\hat{n}_{t}(k)},\,
    \beta(N)  \coloneqq  \sqrt{\frac{1}{2N}\log \frac{\pi^2KN^2}{3\delta}},
\end{split}
\]
and $\gamma^{(m)}(t)$ is the attack value,
$\Delta_0 > 0$ and $\delta >0$ are the parameters of the attack strategy. 
This strategy is similar to the attack design against single-agent bandits~\citep{jun2018adversarial}. It guarantees that the empirical means of non-target arms, after the attack, consistently remain below that of the target arm.
We define the reward mean gap of two arms as $\Delta(k,k') \coloneqq \mu(k) - \mu(k')$. \Cref{thm:homo} provides the upper bound of the cumulative cost $C(T) = \sum_{t=1}^{T}|\gamma^{(m)}(t)|$
(note we only attack one agent \(m\)) for CO-UCB with confidence parameter $\alpha$.
\begin{theorem}\label{thm:homo}
    Suppose $T>K, \delta < 1/2, \Delta_0 > 0$. With probability at least $1-\delta$, our attack strategy misguides \emph{all agents}, running the CO-UCB algorithm, to choose the target arm $K$ at least $T - o(T)$ times, or formally, \[
    \hat{n}_T(K) \ge T - \frac{\alpha(K-1)}{2\Delta_0^2}\log T,
    \] 
    using a cumulative cost at most
    \begin{equation*}\textstyle
        \begin{aligned}
            C(T) \le
              \left(\frac{\alpha}{2\Delta_0^2} \log T\right)\sum_{k<K}(\Delta(k,K) + \Delta_0) + \frac{4(K-1)\sigma}{\Delta_0}\sqrt{\alpha\log T\log\frac{K\pi^2\alpha^2(\log T)^2}{12\delta\Delta_0^4}}.
        \end{aligned}
    \end{equation*}
\end{theorem}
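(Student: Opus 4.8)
The plan is to adapt the single-agent UCB attack analysis of \citep{jun2018adversarial} to the shared-statistics multi-agent setting. First I would fix a clean event on which the pre-attack global empirical mean of every arm stays within its confidence width of the truth: for all $k$ and all $t$, $\lvert\tilde\mu_t(k)-\mu(k)\rvert\le\beta(\hat n_t(k))$, where $\tilde\mu$ denotes the pre-attack mean. This follows from sub-Gaussian concentration plus a union bound over the $K$ arms and over all possible values of the global count $\hat n_t(k)$; the precise form of $\beta(N)$ (the $\log(\pi^2KN^2/(3\delta))$ numerator and the $\sum_N N^{-2}=\pi^2/6$ trick) is engineered exactly so that the total failure probability is at most $\delta$. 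Since the target arm $K$ is never attacked, its post-attack mean equals its pre-attack mean, $\hat\mu_t(K)=\tilde\mu_t(K)$, so on the clean event $\hat\mu_t(K)\ge\mu(K)-\beta(\hat n_t(K))$.

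Next I would bound how often each non-target arm is pulled. Because all agents share the global statistics $\hat\mu_t,\hat n_t$, an arm $k<K$ is selected at round $t$ only if its CO-UCB index beats the target's, i.e. $\hat\mu_t(k)+\sqrt{\alpha\log t/(2\hat n_t(k))}\ge\hat\mu_t(K)+\sqrt{\alpha\log t/(2\hat n_t(K))}$. Substituting the attack invariant \eqref{equ:IC-inequality}, which forces $\hat\mu_t(K)-\hat\mu_t(k)\ge 2\beta(\hat n_t(K))+\Delta_0$, and discarding the nonnegative target confidence term yields $\sqrt{\alpha\log t/(2\hat n_t(k))}\ge\Delta_0$, hence $\hat n_t(k)\le(\alpha/(2\Delta_0^2))\log T$. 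Any single agent's pulls of $k$ are a subset of the global ones, so each agent pulls non-target arms at most $(\alpha(K-1)/(2\Delta_0^2))\log T$ times in total; consequently every agent, and a fortiori the global count, satisfies $\hat n_T(K)\ge T-(\alpha(K-1)/(2\Delta_0^2))\log T$, which is the first claim.

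For the cost I would argue per arm and telescope. The per-round increments $\gamma^{(m)}(t)$ are all nonnegative (the manipulation only ever lowers the mean), so the cumulative attack deposited into arm $k$'s reward sum equals the total displacement $G_k=\hat n_t(k)\bigl(\tilde\mu_t(k)-\hat\mu_t(k)\bigr)$, evaluated at the last round $t$ in which $k$ is pulled, with $\hat\mu_t(k)$ held at the target value $\hat\mu_t(K)-2\beta(\hat n_t(K))-\Delta_0$. The central observation is that $G_k$ depends only on the global count and on the gap between the pre-attack mean and the target value, not on how many agents pulled $k$: this is precisely why corrupting one agent suffices and why the cost is independent of $M$. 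On the clean event $\tilde\mu_t(k)-\hat\mu_t(K)\le\Delta(k,K)+\beta(\hat n_t(k))+\beta(\hat n_t(K))$, so $G_k\le\hat n_t(k)(\Delta(k,K)+\Delta_0)+\hat n_t(k)\beta(\hat n_t(k))+O\bigl(\hat n_t(k)\beta(\hat n_t(K))\bigr)$. Inserting the cap $\hat n_t(k)\le N^\star:=(\alpha/(2\Delta_0^2))\log T$ into the first term and summing over $k<K$ reproduces the leading $(\alpha/(2\Delta_0^2)\log T)\sum_{k<K}(\Delta(k,K)+\Delta_0)$. For the remainder I would use $N\beta(N)=\Theta\bigl(\sigma\sqrt{N\log(\cdot)}\bigr)$ and the monotonicity of $N\beta(N)$ to bound $\hat n_t(k)\beta(\hat n_t(k))\le N^\star\beta(N^\star)$; substituting $N^\star$ inside the logarithm gives a term of order $(\sigma/\Delta_0)\sqrt{\alpha\log T\,\log(K\pi^2\alpha^2(\log T)^2/(12\delta\Delta_0^4))}$, matching the second term.

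The step I expect to be most delicate is the cross term $\hat n_t(k)\beta(\hat n_t(K))$, since $\beta(\hat n_t(K))$ is largest exactly when the target has been pulled few times. The plan is to recycle the selection condition: at any round where $k$ is still pulled, $2\beta(\hat n_t(K))\le\sqrt{\alpha\log t/(2\hat n_t(k))}$, so this term is again of order $\sqrt{\alpha\log T\cdot N^\star}$ and folds into the second term; making this rigorous requires reconciling the attacker's confidence width $\beta$ (tuned by $\delta$) with CO-UCB's confidence width (tuned by $\alpha$). The other multi-agent subtlety is the bookkeeping when several agents select $k$ in the same synchronized round while only agent $m$'s observation is corrupted: I would check that a single, possibly large, $\gamma^{(m)}(t)$ can always realign the global mean to the target value, and that the per-round increments telescope to $G_k$ with no sign cancellation. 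Once both points are settled, collecting the per-arm bounds over $k<K$ and intersecting with the clean event (probability at least $1-\delta$) delivers the two stated bounds simultaneously.
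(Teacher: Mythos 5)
Your proposal follows essentially the same route as the paper's proof: the same clean event on pre-attack global means, the same pull-count bound obtained by combining the UCB selection condition with the attack invariant, and the same telescoped per-arm cost bound summed over $k<K$. The one spot you flag as delicate, the cross term $\hat n_t(k)\beta(\hat n_t(K))$, is resolved in the paper more simply than you suggest: the very inequality $\sqrt{\alpha\log t/(2\hat n_t(k))}-\sqrt{\alpha\log t/(2\hat n_t(K))}\ge\Delta_0>0$ that you derive already gives $\hat n_t(k)\le\hat n_t(K)$, so $\beta(\hat n_t(K))\le\beta(\hat n_t(k))$ by monotonicity and no reconciliation of the two confidence widths is needed.
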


The cumulative attack cost is in the order of \(O(K \log T)\), suggesting that even when attacks are limited to a single agent, no additional costs are incurred, as the corrupted observations would propagate to the whole system. Intuitively, it is equivalent to evenly spreading an \(O(K/M \log T)\) cost to each agent.
Notably, the total cost is independent of the number of agents \(M\), which highlights the vulnerabilities in \matob, as the cost does not escalate with an increase in the number of agents.

\section{Attacks in Heterogeneous Settings}\label{sec:hetero}
In this section, we study adversarial attacks on \matob in heterogeneous settings, where the agent can have distinct local arm sets. We assume that all agents run the CO-UCB~\citep{yang2022distributed} algorithm, which is a representative \matob algorithm for heterogeneous settings\footnote{There are other \matob algorithms~\citep{baek2021fair,wang2023explore} that enjoy better regrets on the heterogeneous setting. Since all these algorithms rely on UCB, we pick CO-UCB as an example to devise the attack approach and believe it can be extended to other UCB-based heterogeneous \matob algorithms.}.
% \jinhang{for homogeneous settings we can also just consider CO-UCB for consistency, since UCB-TCOM requires some discussion on the commutation delay but it is not our main focus.}
% and our results can be easily extended to other heterogeneous algorithms.
We first discuss the viability of different attack objectives. Following that, we propose attack strategies with the appropriate objective and offer theoretical analyses of their associated costs.

\subsection{Attack Objectives}\label{subsec:conflict}
\begin{figure}[t]
	\centering
	\begin{subfigure}[b]{0.3\textwidth}
		\centering
		\includegraphics[width=\textwidth]{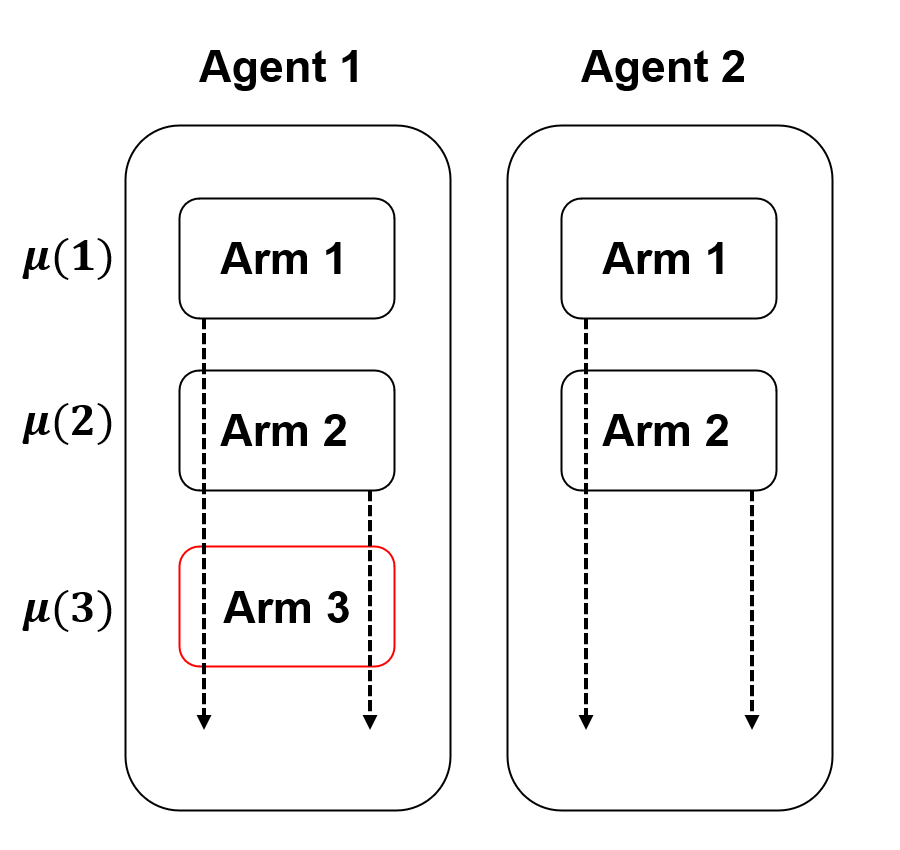}
		\caption{Target arm attack}
		\label{fig:obj_1}
	\end{subfigure}
	\begin{subfigure}[b]{0.3\textwidth}
		\centering
		\includegraphics[width=\textwidth]{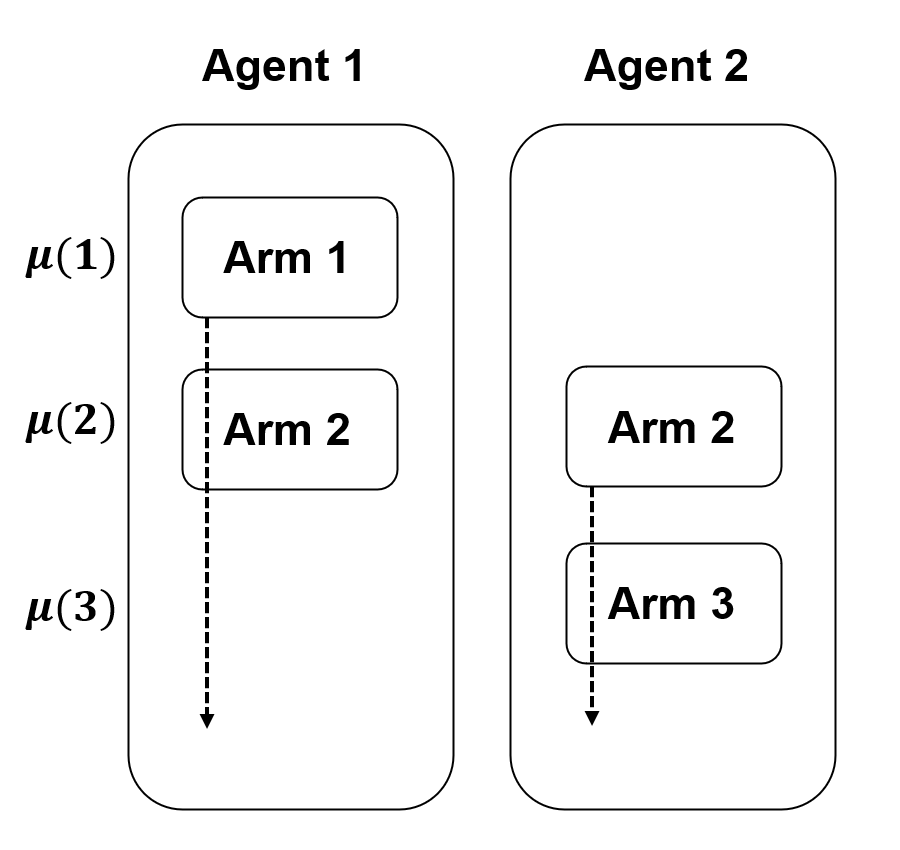}
		\caption{Linear regret attack}
		\label{fig:obj_2}
	\end{subfigure}
	\caption{Illustration of attack objectives}
	\label{fig:obj}
\end{figure}
While the majority of heterogeneous \matob algorithms are derivatives of their homogeneous counterparts, the distinctiveness introduced by agent heterogeneity poses novel challenges in devising adversarial attacks. We first consider the original target arm attack as in the homogeneous settings, which aims to deceive all agents into selecting a target arm $T-o(T)$ times. Intriguingly, in heterogeneous settings, achieving this objective might require linear attack costs. A simple example of two agents is shown in \Cref{fig:obj_1}. We consider the target arm to be arm $3$ in agent $1$, and intuitively, once arm $1$ or $2$ is pulled, the attacker needs to decrease their rewards. However, given this heterogeneous setup, agent $2$ only has access to arms $1$ and $2$. Therefore, it is compelled to select them repeatedly, and their reward samples are subsequently sent to agent $1$. As a result, to deceive agent $1$ into frequently selecting arm $3$, linear attack costs on agent $2$ become necessary.
\Cref{prop:obj_1} formally shows the necessity of linear costs to realize the target arm attack.
\begin{proposition}\label{prop:obj_1}
	For any attack strategy that can successfully mislead the agents running CO-UCB in \Cref{fig:obj_1} to pull the target arm $T-o(T)$ times, its attack cost is at least $C(T) \ge c T$ for some constant $c>0$.
\end{proposition}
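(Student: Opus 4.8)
The plan is to show that, however the attacker splits its budget between the two agents, forcing agent~$1$ (the only agent owning the target arm~$3$) to play arm~$3$ for $T-o(T)$ rounds must shift some empirical mean across a constant gap over $\Theta(T)$ samples, which costs $\Omega(T)$. I would work on the high-probability event (probability at least $1-\delta$) on which the \emph{pre-attack} empirical mean of every arm stays within its confidence width $\beta(\cdot)$ of its true mean; this is the standard time-uniform UCB concentration already baked into the definition of $\beta$. Write $\hat n_T(k)$ for the global pull count of arm~$k$ and decompose the post-attack empirical mean as $\hat\mu_t(k)=\hat\mu_t^{\mathrm{true}}(k)+A_t(k)/\hat n_t(k)$, where $A_t(k)$ is the signed cumulative manipulation on arm~$k$, $|A_t(k)|\le S_k(t)$ with $S_k(t)$ the absolute manipulation spent on arm~$k$ up to round~$t$, and $C(T)\ge S_1(T)+S_2(T)+S_3(T)$.

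The decisive structural fact is that agent~$2$ owns only $\{1,2\}$, so it pulls one of these two arms in each of the $T$ rounds; hence $\hat n_T(1)+\hat n_T(2)\ge T$ and at least one of arms~$1,2$ is pulled $\Theta(T)$ times globally. The true samples of that arm are i.i.d.\ with mean $\mu(1)$ or $\mu(2)$, both strictly above $\mu(3)$, so its post-attack empirical mean is pinned near its true mean unless the attacker spends a constant per sample on it.

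To turn this into a bound I would feed the concentration bounds into the CO-UCB selection rule. They give $\hat\mu_t(3)\le \mu(3)+\beta(\hat n_t(3))+S_3(t)/\hat n_t(3)$ and $\hat\mu_t(j)\ge \mu(j)-\beta(\hat n_t(j))-S_j(t)/\hat n_t(j)$ for $j\in\{1,2\}$. Whenever agent~$1$ plays arm~$3$ at round~$t$, CO-UCB forces $\UCB_t(3)\ge \UCB_t(j)$; lower-bounding $\UCB_t(j)\ge \mu(j)-S_j(t)/\hat n_t(j)$ (using $\UCB_t(j)=\min(b,\hat\mu_t(j)+\beta(\hat n_t(j)))$ and $b\ge\mu(j)$) and upper-bounding $\UCB_t(3)\le \mu(3)+2\beta(\hat n_t(3))+S_3(t)/\hat n_t(3)$ yields
\[
\frac{S_3(t)}{\hat n_t(3)}+\frac{S_j(t)}{\hat n_t(j)}\ \ge\ \Delta(j,3)-2\beta(\hat n_t(3)),\qquad j\in\{1,2\}.
\]
I would then evaluate this at the last round $t^\star$ in which agent~$1$ pulls arm~$3$: success means $\hat n_{t^\star}(3)=\hat n_T(3)=T-o(T)$, so $\beta(\hat n_{t^\star}(3))=o(1)$ and the right-hand side is at least $\Delta(j,3)/2$ for large~$T$. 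By the structural fact one of $\hat n_{t^\star}(1),\hat n_{t^\star}(2)$ is $\Omega(T)$; picking that index $j$, together with $\hat n_{t^\star}(3)=\Theta(T)$ and $S_k(t^\star)\le C(T)$, the left-hand side is at most $O(C(T)/T)$, which forces $C(T)\ge cT$ with $c=\Theta(\Delta(2,3))>0$.

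The step I expect to be the main obstacle is ruling out the ``cheap inflation'' route: one might hope to make agent~$1$ prefer arm~$3$ by inflating arm~$3$'s observed reward rather than deflating arms~$1,2$. The displayed inequality handles both routes uniformly because $S_3$ is divided by $\hat n_t(3)=T-o(T)$, so arm~$3$ is itself played a linear number of times and any constant upward shift of its empirical mean also costs $\Omega(T)$. The complementary subtlety is that the $S_j/\hat n_t(j)$ term must not be made cheap by a small $\hat n_t(j)$, and this is exactly where the ``agent~$2$ fills $\{1,2\}$'' counting is needed to guarantee $\hat n_{t^\star}(j)=\Omega(T)$ for the chosen~$j$.
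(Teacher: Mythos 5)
Your proposal is correct and follows essentially the same route as the paper's proof: both evaluate the UCB comparison at the last round agent~$1$ pulls arm~$3$, bound the pre-attack empirical means by concentration, and exploit the fact that agent~$2$ can only pull arms $1$ and $2$ so that $\max\{\hat n_t(1),\hat n_t(2)\}\ge t/2$, which forces the cumulative manipulation on arm~$3$ and on that heavily-pulled arm to be linear in $T$. The only cosmetic difference is that you absorb the $\beta(\hat n_t(j))$ term via the cap at $b$, whereas the paper keeps it and chooses explicit constants ($\Delta(2,3)\ge 4$, $\sigma=0.5$) to make the resulting bound exceed $1$.
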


Thus, the target arm attack may not be an appropriate attack objective in heterogeneous environments. Shifting our focus, we consider an alternative attack objective: misleading all agents toward linear regrets. While this objective seems less stringent, as it merely mandates agents not to choose their local optimal arms, the intrinsic heterogeneity of available arms brings forth complexities. Notably, there might be agents that, given the disparity in arm sets, cannot be simultaneously misguided towards linear regrets with only sublinear costs. Such agents are termed as \emph{``conflict''} agents. An example of this scenario is depicted in \Cref{fig:obj_2}.
Our objective is to deceive agents $1$ and $2$, preventing them from selecting their locally optimal arms, and thereby incurring linear regrets. Notably, while arm $2$ is suboptimal for agent $1$, it is optimal for agent $2$. After the attacks, agent $1$ should pull arm $2$ linear times, and those rewards will be communicated to agent $2$ to affect its arm selection. Consequently, ensuring agent $2$ chooses its suboptimal arm $3$ almost linear times incurs linear attack costs.
In~\Cref{prop:obj_2}, we further demonstrate that linear costs are inevitable when pursuing this attack objective.
\begin{proposition}\label{prop:obj_2}
	For any attack strategy that can successfully mislead all agents running CO-UCB in \Cref{fig:obj_2} to suffer linear regrets, its attack cost is at least $C(T) \ge c T$ for some constant $c>0$.
\end{proposition}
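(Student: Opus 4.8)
The plan is to specialise to the two-agent instance of \Cref{fig:obj_2}, in which agent~$1$ holds arms $\{1,2\}$ with local optimum~$1$ and agent~$2$ holds arms $\{2,3\}$ with local optimum~$2$ (and $\mu(1)>\mu(2)>\mu(3)$), and to exploit that arm~$2$ is \emph{shared}. Since CO-UCB pools observations, both agents read a single global post-attack empirical mean $\hat\mu_t(2)$ and global count $\hat n_t(2)$, fed by the pulls of \emph{both} agents. First I would convert the objective into pull counts: each agent has a single suboptimal arm, so forcing linear regret means driving agent~$1$ onto arm~$2$ and agent~$2$ onto arm~$3$, and to make the conflict bite I use the operative notion of a successful attack (cf.\ ``almost linear times'' above), namely that each affected agent selects its local optimal arm only $o(T)$ times. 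Thus agent~$1$ pulls arm~$2$ for $T-o(T)$ rounds and agent~$2$ selects arm~$3$ for $T-o(T)$ rounds.

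The pivotal object is $t_1$, the \emph{last} round in which agent~$2$ selects arm~$3$. Because agent~$2$ makes $T-o(T)$ such selections, all at rounds $\le t_1$, we have $t_1 \ge T-o(T)$ and $\hat n_{t_1}(3)=T-o(T)$. The key point---where the temporal coupling between the two agents is resolved---is that at this same round the shared count of arm~$2$ is also large: agent~$1$ pulls arm~$2$ a total of $T-o(T)$ times, at most $T-t_1=o(T)$ of which can fall after $t_1$, so $\hat n_{t_1}(2)\ge T-o(T)$ as well. Hence, no matter how the attacker schedules the two agents in time, \emph{both} shared counts are $\Omega(T)$ at $t_1$; this is the step I expect to be the main obstacle, and it is exactly what forbids the attacker from accumulating the two agents' regret in disjoint time windows.

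At $t_1$ agent~$2$ prefers arm~$3$ to arm~$2$, so the CO-UCB rule gives $\hat\mu_{t_1}(2)+\beta(\hat n_{t_1}(2)) \le \hat\mu_{t_1}(3)+\beta(\hat n_{t_1}(3))$. Since both counts are $\Omega(T)$, both confidence widths are $o(1)$, whence
\[
\bigl(\mu(2)-\hat\mu_{t_1}(2)\bigr)+\bigl(\hat\mu_{t_1}(3)-\mu(3)\bigr) \ge \Delta(2,3)-o(1) \ge \tfrac12\Delta(2,3)
\]
for $T$ large. Therefore either arm~$2$ is deflated or arm~$3$ is inflated by at least $\tfrac14\Delta(2,3)$ relative to its true mean, with the corresponding arm having count $\Omega(T)$ at $t_1$.

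To turn a distorted empirical mean into cost I would use a standard cost--distortion estimate: on the high-probability event that every arm's pre-attack empirical average concentrates around its mean (a sub-Gaussian union bound over rounds), the cumulative magnitude of manipulation applied to an arm pulled $N$ times with post-attack mean $\hat\mu$ is at least $N\lvert\hat\mu-\mu\rvert-O(\sqrt{N\log(1/\delta)})$, because the total signed corruption equals $N$ times the gap between the post-attack mean and the (concentrated) clean average, and $C(T)$ dominates the sum of absolute corruptions. Applying this to whichever of arm~$2$ or arm~$3$ is distorted by $\tfrac14\Delta(2,3)$ at $t_1$, with its count $N=\Omega(T)$, yields $C(T)\ge \tfrac14\Delta(2,3)\,\Omega(T)-O(\sqrt{T\log T})=cT$ for a suitable constant $c>0$, which is the claim. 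The only genuinely delicate point is the synchronization argument of the second paragraph; the remaining steps are the decision-rule rearrangement and the concentration bookkeeping, both routine.
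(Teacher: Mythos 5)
Your proposal is correct and follows essentially the same route as the paper's proof: both identify the last round $t_1$ at which agent $2$ pulls arm $3$, use the CO-UCB selection rule to compare the (shared, post-attack) indices of arms $2$ and $3$ at that round, invoke concentration of the pre-attack means to convert the resulting $\Omega(1)$ distortion of a linearly-pulled arm into $\Omega(T)$ cumulative corruption, and sum. Your explicit argument that $\hat n_{t_1}(2)$ and $\hat n_{t_1}(3)$ are simultaneously $T-o(T)$ is in fact a slightly more careful justification of a step the paper simply assumes (that $\min\{\hat n_t(2),\hat n_t(3)\}\ge ct$), and you avoid the paper's hard-coded choice $\Delta(2,3)\ge 4$, $\sigma=0.5$; otherwise the two arguments coincide.
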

Although sublinear attack costs might not be sufficient in leading all agents to experience linear regrets, they can still influence a subset of the agents. This realization prompts the final attack objective explored in this section: leveraging sublinear attack costs to misguide the maximum number of agents, aiming to increase the overall count of agents enduring linear regrets. To realize this goal, it is necessary to identify the largest set of agents that do not have conflicts. The condition under which two agents are deemed to be in conflict is given in \Cref{cond:conflict}.
\begin{condition}[Agents conflict]\label{cond:conflict}
	For any two agents $m,m' \in \mathcal{M}$, if $|\mathcal{K}\upbra{m} \setminus \{k\upbra{m}_*, k\upbra{m'}_*\}|=0$ or $|\mathcal{K}\upbra{m'} \setminus \{k\upbra{m}_*, k\upbra{m'}_*\}|=0$, we say agent \(m\) conflicts with agent $m'$, i.e., they cannot be simultaneously attacked with sublinear costs.
\end{condition}
Given this condition, we introduce a greedy algorithm designed to identify the largest group of affected agents without conflicts. Furthermore, we devise attack strategies tailored for both known (\Cref{subsec:oracle-attack}) and unknown (\Cref{subsec:learning-then-attack}) environments.

\subsection{Oracle Attack Strategy}\label{subsec:oracle-attack}
We study a scenario in which the attack algorithm has prior knowledge of the environment. Specifically, the attacker knows the reward ranking of all arms. This premise is less stringent than the ``oracle attack'' assumption from previous studies~\citep{jun2018adversarial,liu2019data}, which requires precise knowledge of the mean rewards for every arm. Note that
while we similarly label our approach as an ``oracle attack'' and use the $\mu(k)$ notation in our algorithms, they only rely on the relative arm ranking but not the exact means.

\textbf{Affected Agents Selection}.
As previously mentioned, the first step for the attacker is to identify the largest group of agents without conflicts as the attack objective. Although \Cref{cond:conflict} allows us to check the conflict status between any pair of agents, finding the maximal conflict-free agent group poses a non-trivial combinatorial optimization challenge. 
To solve this problem, we propose the Affected Agents Construction (\aas) algorithm, described in \Cref{alg:aas}.
Initially, the algorithm categorizes all agents into separate sets based on their local optimal arms, i.e., agent set \(\mathcal{M}_*(k)\coloneqq \{m\in\mathcal{M}: k_*\upbra{m} = k\}\) contains all agents whose local optimal arm is arm \(k\), and sort them by set size \(M_*(k) \coloneqq \abs{\mathcal{M}_*(k)}\). Following this, a greedy set selection process is employed. The group of selected agents is maintained by $\mathcal{D}_0$. The algorithm examines sets from $M_*(\omega(1))$ to $M_*(\omega(k))$ sequentially, where \(M_*(\omega(i))\) is the size of the \(i^{\text{th}}\) largest 
 agent sets among \(\{\mathcal{M}_*(1),\dots,\mathcal{M}_*(K)\}\). If all agents within $\mathcal{M}_*(\omega(k))$ do not conflict with the current group $\mathcal{D}_0$, which is checked by \cref{eq:max-agent-condition}, they are incorporated into $\mathcal{D}_0$, and the subsequent set $\mathcal{M}_*(\omega(k+1))$ will be examined. It eventually outputs the target group $\mathcal{D}_0$ and the local optimal arms' set $\mathcal{K}_0$ of agents in the group.
The following theorem provides the theoretical guarantee of \Cref{alg:aas}.
\begin{theorem}\label{thm:aas}
	\Cref{alg:aas} finds a $(1-1/e)$-approximate solution, i.e., the cardinality of the output group $|\mathcal{D}_0| \ge (1-1/e) D_{\max}$, where $D_{\max}$ is the size of the largest conflict-free agent group.
\end{theorem}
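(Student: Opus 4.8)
The plan is to recast \Cref{thm:aas} as a monotone submodular maximization problem over whole agent groups and then invoke the classical greedy guarantee. Since \Cref{cond:conflict} never fires for two agents sharing the same local optimal arm, conflicts live entirely across the groups $\mathcal{M}_*(1),\dots,\mathcal{M}_*(K)$, so the atomic decision is which groups to include, not which individual agents. I would therefore take the ground set to be the arms $k$ with $\mathcal{M}_*(k)\neq\emptyset$, give each element weight $M_*(k)=|\mathcal{M}_*(k)|$, and define $f(\mathcal{A})$ as the number of agents that remain simultaneously affectable when $\mathcal{A}$ is taken as the set of suppressed local optimal arms. With this encoding, \Cref{alg:aas} is precisely the standard greedy that scans candidate groups in decreasing weight order and, using \eqref{eq:max-agent-condition}, adds $\mathcal{M}_*(\omega(k))$ exactly when it creates no conflict with the current selection $\mathcal{D}_0$.

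The argument then rests on two structural facts. First I would show $f$ is monotone and submodular: monotonicity holds because enlarging $\mathcal{A}$ can only expose additional agents whose local optimal arm is now suppressed, and the diminishing-returns property holds because the marginal set of agents a new group contributes can only shrink as $\mathcal{A}$ grows, since earlier choices can remove but never create feasible agents. Second, I would prove that the family of conflict-free arm sets certified by \eqref{eq:max-agent-condition} is downward closed and that group insertion behaves like a single greedy step; the key lemma is that \Cref{cond:conflict}'s \emph{pairwise} test lifts to the whole group, i.e. if no agent of $\mathcal{M}_*(\omega(k))$ conflicts with $\mathcal{D}_0$ then $\mathcal{D}_0\cup\mathcal{M}_*(\omega(k))$ is again conflict-free. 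Combining these with the classical greedy guarantee for monotone submodular maximization yields $f(\mathcal{D}_0)\ge(1-1/e)\,f(\mathcal{D}^\ast)=(1-1/e)\,D_{\max}$, where $\mathcal{D}^\ast$ is an optimal conflict-free group.

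The hard part will be verifying submodularity simultaneously with the claim that the pairwise conflict condition faithfully captures simultaneous affectability, since a purely pairwise test could in principle miss higher-order (three-or-more-group) obstructions. Here I would exploit the ordering $\mu(1)>\cdots>\mu(K)$: an agent in a better group can only ``block'' strictly worse arms, so every conflict edge is generated by a blocker agent whose arm set equals exactly its own local optimal arm together with one strictly worse arm. I expect this better-blocks-worse asymmetry to let the induced independence system inherit the exchange property needed for the greedy analysis. If a clean exchange property proves elusive, the fallback is a direct charging argument that assigns each group of $\mathcal{D}^\ast$ rejected by greedy to a selected group of weight at least as large (guaranteed by the decreasing-weight scan order), recovering a constant factor that can then be sharpened to $(1-1/e)$.
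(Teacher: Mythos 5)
Your proposal follows essentially the same route as the paper's proof: both define a set function on the ground set of arm-indexed groups $\{\mathcal{M}_*(1),\dots,\mathcal{M}_*(K)\}$, establish submodularity via the observation that a conflict with a smaller selection persists in any larger one, and then invoke the classical greedy guarantee to obtain the $(1-1/e)$ factor. If anything, your write-up is more cautious than the paper's very short argument---you explicitly worry about lifting the pairwise conflict test to whole groups and about the constraint structure under which the greedy bound applies, points the paper does not address---but the underlying approach is identical.
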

We prove this approximation ratio using the property of submodular functions in the appendix.
% \textit{Proof sketch}.

\begin{algorithm}[t]
	\caption{\aas: Affected Agents Selection}\label{alg:aas}
	\begin{algorithmic}[1]
            \State \textbf{input} local arm sets \(\mathcal{K}\upbra{m},\,\forall m\in\mathcal{M}\)
		\State \textbf{initialize} \(\mathcal{D}_0\gets \emptyset\), \(\mathcal{K}_0\gets\emptyset\)
		\State Classify all agents into \(\{\mathcal{M}_*(1),\dots,\mathcal{M}_*(K)\}\) according to their local optimal arms
		\State Sort these agent sets according to set size such that \(\mathcal{M}_*(\omega(k))\) is the agent subset that contains the \(k^{\text{th}}\) largest number of agents
		\For{\(k=1,2,\dots, K\)}
		\If{for all agent \(m \in \mathcal{D}_0 \cup \mathcal{M}_*(\omega(k)) \) we have
        \begin{equation}\label{eq:max-agent-condition}
			\abs*{\mathcal{K}\upbra{m}\setminus (\mathcal{K}_0 \cup \{\omega(k)\})} > 0
		\end{equation}}
		\State \(\mathcal{D}_0 \gets \mathcal{D}_0\cup \mathcal{M}_*(\omega(k))\), \(\mathcal{K}_0 \gets \mathcal{K}_0 \cup \{\omega(k)\}\)
		\EndIf
		\EndFor
		\State \textbf{return} $\mathcal{D}_0, \mathcal{K}_0$
	\end{algorithmic}
\end{algorithm}

\textbf{Target Agents Selection}.
With output agent group $\mathcal{D}_0$ as the attack objective, a natural question arises: is it feasible to attack this group using only a limited number of agents, similar to the scenario in homogeneous settings? To address this, we introduce the Target Agents Selection (\tas) algorithm, which refines the selection of target agents to attack within $\mathcal{D}_0$.
As described in \Cref{alg:tas}, for each agent $m \in \mathcal{D}_0$, it finds $k_0^{(m)}$, its local arm with the highest mean reward excluding all arms in $\mathcal{K}_0$ (Lines~\ref{line:exclude}-\ref{line:k_0^m}). Then, for each arm $k \in \mathcal{K}_0$, it checks $\mathcal{D}_{0,*}(k)$, which contains all agents in $\mathcal{D}_0$ with local optimal arm $k$, i.e., $\mathcal{D}_{0,*}(k) = \{m\in \mathcal{D}_0: k_*\upbra{m} = k\}$; it chooses the agent in $\mathcal{D}_{0,*}(k)$ with the lowest $\mu(k_0^{(m)})$ and include it into the target agent set $\mathcal{G}_0$. Intuitively, in order to attack each arm $k\in \mathcal{K}_0$, \Cref{alg:tas} selects the agent in $\mathcal{D}_0$ that is most likely to pull arm $k$ very often, since its alternative action $k_0^{(m)}$ closest to $k$ has the least attractive mean reward. Such target agent selection will help us control the number of times that $k\in \mathcal{K}_0$ is pulled by agents outside $\mathcal{G}_0$, ensuring successful attacks.

\begin{algorithm}[t]
	\caption{\tas: Target Agents Selection}\label{alg:tas}
	\begin{algorithmic}[1]
		\State \textbf{input} \(\mathcal{D}_0, \mathcal{K}_0\)
        \State \textbf{initialize} \(\mathcal{G}_0\gets \emptyset\)
		\For{\(m \in \mathcal{D}_0\)} \Comment{for each affected agent}
        \State $\mathcal{K}^{(m)}_0 \gets \mathcal{K}^{(m)} \setminus \mathcal{K}_0$ \label{line:exclude}
        \State $k_0^{(m)} \gets \argmax_{k \in \mathcal{K}^{(m)}_0} \mu(k)$ \label{line:k_0^m}
		\EndFor
        \For{\(k \in \mathcal{K}_0\)} \Comment{for local optimal arms to attack}
        \State $g(k) \gets \argmin_{m \in \mathcal{D}_{0,*}(k)} \mu(k_0^{(m)})$
        % \RightComment{break the tie arbitrarily}
        \State $\mathcal{G}_0 \gets \mathcal{G}_0 \cup \{g(k)\}$
  		\EndFor
		\State \textbf{return}  $\mathcal{G}_0$
	\end{algorithmic}
\end{algorithm}

\textbf{Attack Strategy and Analysis}.
\begin{algorithm}[t]
	\caption{Oracle Attack}\label{alg:OA}
	\begin{algorithmic}[1]
		\State \textbf{input} \(\Delta_0\)
        \State $\mathcal{D}_0, \mathcal{K}_0 \gets$ \texttt{\aas}$((\mathcal{K}\upbra{m})_{m\in\mathcal{M}})$
        \State $\mathcal{G}_0 \gets$ \texttt{\tas}$(\mathcal{D}_0, \mathcal{K}_0)$
        \For{\(t = 1,2,\cdots,\)}
        \For{agent $m \in \mathcal{G}_0$}
            \If{$k_t^{(m)} \in \mathcal{K}_0$}
            \State Attack $k_t^{(m)}$ according to \cref{eq:OA_attack}
            \EndIf 
  		\EndFor
        \EndFor
	\end{algorithmic}
\end{algorithm}
We present the Oracle Attack (OA) algorithm as detailed in \Cref{alg:OA}. Initially, it invokes both \aas and \tas to select the target agent set \( \mathcal{G}_0 \) responsible for executing the attacks. Subsequently, whenever an agent \( m \in \mathcal{G}_0 \) chooses an arm \( k \in \mathcal{K}_0 \), it attacks $k$ with attack value \( \gamma_{t}^{(m)}(k) \) to satisfy the ensuing inequality:
\begin{equation}\label{eq:OA_attack}
    \hat{\mu}_{t}(k) \leq \min_{k' \in \mathcal{K}\backslash\mathcal{K}_0}\{\hat{\mu}_{t-1}(k') - 2\beta(\hat{n}_{t-1}(k')) - \Delta_0\},
\end{equation}
where $\hat{\mu}_{t}(k) = \frac{\hat{\mu}_{t-1}(k)\hat{n}_{t-1}(k) + \sum_{m'=1}^MX_t\upbra{m',0}(k) - \gamma_t\upbra{m}(k)}{\hat{n}_t(k)}$. 
We define $\Delta_{\min} = \min_{k} \Delta(k, k+1)$ and provide the attack cost analysis for \Cref{alg:OA}.
\begin{theorem}\label{thm:OA}
	Suppose $T > T_0, \alpha > 2$, where \(T_0\) is a time-independent constant fulfills~\eqref{eq:solve_T_0}. With probability at least $1 - \delta$, \Cref{alg:OA} misguides the agents to suffer regret at least
	\begin{equation*}\textstyle
		\begin{aligned}
			R(T) \ge \Delta_{\min}\sum_{k \in \mathcal{K}_0}  
			\left({M}_*(k)T-\frac{2\alpha\log T}{\Delta_0^2} \right),
		\end{aligned}
	\end{equation*}
	using the cumulative cost at most
	\begin{align*}\textstyle
			C(T)  \le \sum_{k \in \mathcal{K}_0} \left(\frac{\alpha\log T}{2\Delta_0^2}(\Delta(k, K) + \Delta_0) \right. + T_0 
			     \left.~+ \frac{4\sigma}{\Delta_0} \sqrt{\alpha\log T \log\frac{K\pi^2\alpha^2(\log T)^2}{12\delta\Delta_0^4}}\right).
	\end{align*}
$T_0$ is a feasible solution of the following equation
\begin{equation}\label{eq:solve_T_0}
    \frac{t}{\log t} \ge \max_{k \in \mathcal{K}_0} c_{k}, 
\end{equation}
where \(c_{k}= c_{k,1} + c_{k,2} + c_{k,3}\) and 
\begin{align}\nonumber
     % c_{k} &= c_{k,1} + c_{k,2} + c_{k,3},\\
     c_{k,1} &= \sum_{k' \in \mathcal{K}_{0}^{(g(k))}\setminus \{k_{0}^{(g(k))}\}} \frac{\alpha}{2\Delta^2(k_{0}^{(g(k))}, k')},\nonumber\\
     c_{k,2} &= \frac{\alpha}{2\min_{m\in\mathcal{D}_{0,*}(k): k_0\upbra{m}\neq k_0\upbra{g(k)}}\Delta^2(k_{0}^{(m)}, k_{0}^{(g(k))})},\nonumber\\
     c_{k,3} &= |\mathcal{K}^{(g(k))} \cap \mathcal{K}_0| \cdot \frac{\alpha}{\Delta_0^2}.\nonumber
\end{align}
\end{theorem}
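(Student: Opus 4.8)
The plan is to mirror the single-agent UCB attack analysis of~\citet{jun2018adversarial} and the homogeneous \Cref{thm:homo}, but to carefully account for the fact that the empirical mean $\hat\mu_t(k)$ of each arm is shared globally whereas the attack on $k\in\mathcal{K}_0$ is injected only through the single target agent $g(k)$. First I would condition on a clean concentration event $\mathcal{E}$: for every arm $k$ and round $t$, the \emph{pre}-attack empirical mean lies within $\beta(\hat n_t(k))$ of $\mu(k)$. The definition $\beta(N)=\sqrt{\tfrac1{2N}\log\tfrac{\pi^2 KN^2}{3\delta}}$ is tailored so that a sub-Gaussian tail bound followed by a union bound over arms and pull counts yields $\P(\mathcal{E})\ge 1-\delta$, and all subsequent claims are made on $\mathcal{E}$.

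Next I would establish the attack's effect. On $\mathcal{E}$, the update rule~\eqref{eq:OA_attack} forces the post-attack empirical mean of every $k\in\mathcal{K}_0$ below $\hat\mu_{t-1}(k')-2\beta(\hat n_{t-1}(k'))-\Delta_0$ for all $k'\notin\mathcal{K}_0$, so once $\hat n_t(k)$ is large enough that the CO-UCB bonus drops below order $\Delta_0$, the CO-UCB index of $k$ falls strictly under that of any accessible non-$\mathcal{K}_0$ arm. Here I would invoke the conflict-free guarantee of \aas (\Cref{cond:conflict}, \Cref{thm:aas}): every agent---affected or not---retains at least one arm outside $\mathcal{K}_0$, and every agent outside $\mathcal{D}_0$ has its local optimal arm outside $\mathcal{K}_0$. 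Hence after the attack concentrates, no agent of either type ever pulls an arm of $\mathcal{K}_0$ again, which simultaneously drives the affected agents off their local optima (the source of linear regret) and caps the attack cost.

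The crux is bounding the global pull count $\hat n_T(k)$ for each $k\in\mathcal{K}_0$. I would split the horizon at the threshold round $T_0$ of~\eqref{eq:solve_T_0} and argue that all exploration of $k$ is completed by $T_0$. The difficulty---the main obstacle---is that the suppression of $\hat\mu_t(k)$ is refreshed only when the designated agent $g(k)$ pulls $k$; a pull of $k$ by any other agent feeds an un-attacked (hence larger) reward into the shared mean and could lift $k$ back above threshold. The resolution is exactly the \tas design (\Cref{alg:tas}): $g(k)$ is chosen in $\mathcal{D}_{0,*}(k)$ to minimize the mean $\mu(k_0\upbra{m})$ of its best non-$\mathcal{K}_0$ alternative, making it the agent most strongly pulled back toward $k$ and therefore the \emph{last} to abandon it. I would then show that every other agent abandons $k$ no later than $g(k)$, so that once $g(k)$'s bonus falls below $\Delta_0$ the suppression is self-sustaining. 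The finite exploration budgets incurred beforehand are counted by the three constants: $c_{k,1}$ bounds pulls by $g(k)$ of its own suboptimal non-$\mathcal{K}_0$ alternatives (standard UCB counting against the gap to $k_0\upbra{g(k)}$), $c_{k,2}$ bounds pulls by the competing agents of $\mathcal{D}_{0,*}(k)$ against the gap between their alternatives and $k_0\upbra{g(k)}$, and $c_{k,3}$ counts the $\mathcal{K}_0$-arms accessible to $g(k)$ that must each be explored $O(\tfrac{\alpha}{\Delta_0^2}\log T)$ times before their bonuses shrink. Requiring $t/\log t\ge\max_k c_k$ makes $T_0$ an upper bound on these pulls and yields $\hat n_T(k)\le \tfrac{2\alpha\log T}{\Delta_0^2}$.

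Finally I would assemble the two bounds. For the regret, each affected agent $m$ with $k_*\upbra{m}=k$ pulls its local optimum at most $n_T\upbra{m}(k)$ times, with $\sum_{m:\,k_*\upbra{m}=k} n_T\upbra{m}(k)\le \hat n_T(k)$, and loses at least $\Delta_{\min}$ on every other round; summing over the $M_*(k)$ such agents and over $k\in\mathcal{K}_0$ gives $R(T)\ge \Delta_{\min}\sum_{k\in\mathcal{K}_0}\bigl(M_*(k)T-\tfrac{2\alpha\log T}{\Delta_0^2}\bigr)$. For the cost, $C(T)=\sum_{k\in\mathcal{K}_0}\sum_{t:\,k_t\upbra{g(k)}=k}\abs{\gamma_t\upbra{g(k)}(k)}$; I would bound each attack value by the push needed to drag the pre-attack mean down to the target level and decompose the per-arm total into a true-gap main term of order $\tfrac{\alpha\log T}{2\Delta_0^2}(\Delta(k,K)+\Delta_0)$, the initial-phase term $T_0$, and a concentration term obtained by summing the $\beta$ widths over the attacked pulls, which produces the $\tfrac{4\sigma}{\Delta_0}\sqrt{\alpha\log T\log\tfrac{K\pi^2\alpha^2(\log T)^2}{12\delta\Delta_0^4}}$ factor. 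Summing over $\mathcal{K}_0$ completes the bound, so the whole argument is a per-arm reduction to the single-agent calculation and the only genuinely new work is the propagation-through-shared-means argument enabled by \tas.
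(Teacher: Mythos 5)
Your proposal follows essentially the same route as the paper's proof: condition on the concentration event, use the TAS choice of $g(k)$ to guarantee that $g(k)$ reaches arm $k$ before any other agent in $\mathcal{D}_{0,*}(k)$ (equivalently, is the last to abandon it), bound the rounds in which $g(k)$ does not pull $k_0^{(g(k))}$ by $(c_{k,1}+c_{k,3})\log t$ so that $\hat{n}_t(k_0^{(g(k))}) \ge c_{k,2}\log t$ for all $t>T_0$, and then reduce the post-$T_0$ analysis to the homogeneous single-target-agent pull-count and cost calculation. The only cosmetic difference is that the paper uses $c_{k,2}\log t$ as the threshold on $\hat{n}_t(k_0^{(g(k))})$ required for the comparison $UCB_t(k_0^{(g(k))})\le \mu(k_0^{(m)})$ rather than as an exploration budget of the competing agents, but the defining inequality $t-(c_{k,1}+c_{k,3})\log t\ge c_{k,2}\log t$ is identical.
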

The attack cost follows the order of \(O(|\mathcal{K}_0| \log T)\) and is independent of the number of affected agents, \(|\mathcal{D}_0|\). This implies that a small attack cost can have a substantial impact on numerous agents in heterogeneous settings. Notice that directly comparing this result with \Cref{thm:homo} would be unfair, given the distinct objectives and settings they address. As the proof of this theorem is one of our main technical contributions, we briefly discuss its key idea below.

\textit{Proof Challenge and Key Idea}. 
The main challenge in this proof arises from our choice of target agents. These agents can lower the post-attack empirical means of arms in $\mathcal{K}_0$ only when they pull these arms. However, non-target agents may also pull these arms, yielding non-attacked samples that increase the empirical means towards the true means. This issue is especially prominent in heterogeneous settings where target and non-target agents, due to their distinct arm sets, might choose different arms to pull. 
Conversely, in homogeneous settings, 
all agents have access to all arms, consistently offering opportunities for attacks. To address this, we use \tas to choose up to $|\mathcal{K}_0|$ agents from $\mathcal{D}_0$. These target agents are the most likely ones to frequently pull arms in $\mathcal{K}_0$ because their top arms (after excluding those in $\mathcal{K}_0$), denoted as $k_0^{(m)}$, possess the least attractive mean rewards.

Formally, we prove that for any $t > T_0, k\in \mathcal{K}_0$, our target agents in $\mathcal{G}_0$ will prioritize pulling arm $k$ before non-target agents, assuring enough attack opportunities. The value of $T_0$ is calculated by \cref{eq:solve_T_0}. To elucidate, for each arm $k\in \mathcal{K}_0$, target agent $g(k)$ is mainly responsible for attacking it. If $g(k)$'s best arm excluding those in $\mathcal{K}_0$, $k_0^{(g(k))}$, has been sufficiently pulled, i.e., $\hat{n}_t(k_0^{(g(k))}) \ge c_{k,2} \log t$, it will consistently be the first to pull arm $k$ in $\mathcal{D}_{0,*}(k)$. We then bound the number of times that $g(k)$ does not pull $k_0^{(g(k))}$: it can pull local arms worse than $k_0^{(g(k))}$ at most $c_{k,1} \log t$ times, and local arms within $\mathcal{K}_0$ up to $c_{k,3} \log t$ times. Hence, $\hat{n}_t(k_0^{(g(k))}) \ge t - c_{k,1} \log t - c_{k,3} \log t$, and we want $t - c_{k,1} \log t - c_{k,3} \log t \ge c_{k,2} \log t$. We need to ensure this equation for every $k$, leading to \cref{eq:solve_T_0}. Since the right-hand side of \cref{eq:solve_T_0} is a problem-dependent coefficient, we can derive $T_0$ which is independent of $t$. For subsequent rounds $t > T_0$, it is easy to prove the successful attacks with bounded attack costs.

\subsection{LTA: Learning-Then-Attack Strategy}\label{subsec:learning-then-attack}
In this section, we further relax the oracle assumption that the reward ranking of all arms is unknown. This creates a significant challenge as \Cref{alg:aas} and \Cref{alg:tas} need knowledge of the arm ranking to determine the largest set of affected agents and select target agents to attack. To address this issue and adapt the attack design from \Cref{alg:OA} to this scenario, we introduce the Learning-Then-Attack (LTA) algorithm, detailed in \Cref{alg:LTA}. LTA operates in two phases: an initial learning stage, where it discerns reward means and arm rankings, followed by an attack phase akin to \Cref{alg:OA}.
\begin{algorithm}[t]
	\caption{LTA: Learning-Then-Attack}\label{alg:LTA}
	\begin{algorithmic}[1]
		\State Input: confidence parameter $\delta$, minimal mean difference $ \Delta_{\min}$, threshold \(L\)
		\LeftComment{Stage 1: Attack to learn full rank}
		\While{$\min_{k\in \mathcal{K}} \hat{n}_t(k) < L$}
		\ForAll{agent \(m\in\mathcal{M}\)}
		\State Observe the pulled arm \(k_t\upbra{m}\)
		\If{\(\hat{n}_t(k_t\upbra{m}) < L\)}\label{line:less-than-threshold}
            \State Attack $k_t\upbra{m}$ according to~\eqref{eq:incentive_attack} \label{line:incentive-attack}
		\EndIf
		\If{\(\hat{n}_t(k_t\upbra{m}) = L\)}\label{line:reach-threshold}
		\State Recover $k_t\upbra{m}$ to unbiased mean \label{line:recover}
		\EndIf
		\EndFor
		\EndWhile
		\LeftComment{Stage 2: Attack to mislead agents}
            \State Run oracle attack in Algorithm~\ref{alg:OA}
		% \State Find the attack agent set $\mathcal{C}$ by \cref{alg:tas}
		% \ForAll{agent $m \in \mathcal{C}$}
		% \If{$k_t\upbra{m} = k_*\upbra{m}$}
		% \State Attack $k_*\upbra{m}$ according to Eq.
		% \EndIf
		% \EndFor
	\end{algorithmic}
\end{algorithm}

\textbf{Attacker's Learning Stage}.
We now delve into the rank learning stage.
This ranking is pivotal for executing \aas (\cref{alg:aas}) and \tas (\cref{alg:tas}), as they necessitate the knowledge of each agent's local optimal arms to optimize the affected agent group and minimize the target agent subset.
To acquire this ranking, the attacker needs to compel agents to pull each arm multiple times. This ensures a sufficient number of pre-attack samples, allowing for a clear distinction between the Lower Confidence Bounds (LCBs) and Upper Confidence Bounds (UCBs) for every arm pair. 

We initialize the UCB of each arm with a relatively high value. Given that agents choose arms based on UCB algorithms, the attacker needs to stimulate agents to collect ample samples for suboptimal arms by increasing their UCB values through attacks.
More specifically, if the attacker wants to accumulate samples of arm \(k\), it needs to attack the arm's reward such that the subsequent condition is met (Line~\ref{line:incentive-attack}),
\begin{equation}\label{eq:incentive_attack}
	\UCB_t(k) > \UCB_t(k'),\,\forall k\neq k'.
\end{equation}
Once the number of times of pulling the arm \(k\) reaches a threshold \(L \coloneqq \ceil*{\frac{2\log (2K/\delta)}{\Delta^2_{\min}}}\) (Line~\ref{line:reach-threshold}), the attacker resets the arm's empirical mean (remove the prior attacks on the arm) (Line~\ref{line:recover}). This ensures that arms that have not been sufficiently sampled in the past will be selected later.

% \jinhang{Revising here}
% It initiates a learning stage wherein it acquires the rank of all arms. This rank is essential when applying \tas (\cref{alg:tas}) and \aas (\cref{alg:\aas}), which requires the information of local optimal arms for each agent to maximize the affected agent subset.
% To acquire this rank, the attacker must force the agents to pull each arm a sufficient number of times. This is necessary to ensure that the pre-attack samples are abundant enough to distinguish between the Lower Confidence Bounds (LCBs) and Upper Confidence Bounds (UCBs) of each different arm pairs. In this subsection, we consider attacks on the CO-UCB algorithm.

\textbf{Reduce Number of Target Agents}
\begin{condition}[Arm accessibility]\label{cond:cM}
    For each arm \(k\in\mathcal{K}\), there are at least \(cM\in \mathbb{N}^+\) agents in the target agent set \(\mathcal{S}_0\subseteq \mathcal{M}\) being able to access it, where \(c>0\) is the arm accessible rate among target agents.
    Formally, the condition is \[
    \left|\{m\in \mathcal{S}_0: k\in\mathcal{K}\upbra{m} \}\right| \ge cM,\quad \forall k \in\mathcal{K}.
    \]
\end{condition}
We note that this target agent set \(\mathcal{S}_0\) for learning the arm ranking can be different from \(\mathcal{G}_0\) chosen in~\Cref{alg:tas},
and this condition is not restrictive. For example, letting \(\mathcal{S}_0\) be a subset of agents whose local optimal arms together cover the full arm set \(\mathcal{S}\), choosing \(c = \frac{1}{M}\)
is always a valid choice.

Condition~\ref{cond:cM} ensures that during each round of the learning stage, there are a minimum of \(cM\) \emph{effective} observations. Here, ``effective observations'' denote the observations of arms with sample sizes below the threshold \(L\). During this stage, agents are motivated to select arms (provided they are in their local arm sets) with observations fewer than \(L\). Whenever there is an arm that hasn't reached this threshold, a minimum of \(cM\) agents will be motivated to select it, especially when multiple arms are under-observed. Consequently, the learning stage ends after no more than \(\frac{KL}{cM}\) rounds.

\textbf{Analysis}.
Upon completing the learning stage, the attacker has accurate estimates of the reward means for all arms to determine the arm ranking.
It can then apply the oracle attack in~\Cref{alg:OA}.
Notice that in the initial phase of the second stage, when the first time that any arm in $\mathcal{K}_0$ is pulled by a target agent in $\mathcal{G}_0$, the attacker incurs a significant attack cost. This is because the attacker must pay additional costs to alter the unbiased empirical means derived from the learning stage. However, such extra costs can be upper bounded by \(\frac{KL(\Delta(1,K)+\beta(1)+b)}{c}\), where $b$ is the upper bound of the mean rewards.
Subsequent to this initial adjustment, the OA algorithm operates identically to its behavior in the oracle setting.
% Although with the same algorithm design, the attack cost of the attack stage in~\Cref{alg:LTA} is greater than that of the original oracle attack. 
% This is because, in the second stage, the attacker not only needs to pay the original cost of the oracle attack but also spends the cost to manipulate the reward observations in the first learning stage which can be upper bounded by \(\frac{KL\alpha_{\max}}{c}\), where $\alpha_{\max}$ is the maximal attack value per round.
% We provide the following theorem to show the performance of \cref{alg:LTA}.
\begin{theorem}\label{thm:LTA}
	Suppose $T > T_0, \alpha > 2, \delta < 0.5$, where \(T_0\) is a time-independent constant fulfills~\eqref{eq:solve_T_0}. With probability at least $1 - 2\delta$, \Cref{alg:LTA} misguides the agents to suffer regret at least
	\begin{equation*}\textstyle
		\begin{aligned}
			R(T) \ge \Delta_{\min}\sum_{k \in \mathcal{K}_0}  
			\left({M}_*(k)T-\frac{2\alpha\log T}{\Delta_0^2} \right),
		\end{aligned}
	\end{equation*}
	using the cumulative cost at most
	\begin{equation*}\textstyle
		\begin{aligned}
			C(T) & \le \frac{4K\log T}{c\Delta_{\min}^2}(\Delta(1,K)+\beta(1)+b) + \sum_{k \in \mathcal{K}_0} \left(\frac{\alpha\log T}{2\Delta_0^2}(\Delta(k, K) + \Delta_0) \right. + T_0 \\
			     &\left.~+ \frac{4\sigma}{\Delta_0} \sqrt{\alpha\log T \log\frac{K\pi^2\alpha^2(\log T)^2}{12\delta\Delta_0^4}}\right).
		\end{aligned}
	\end{equation*}
\end{theorem}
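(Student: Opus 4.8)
The plan is to treat LTA as a composition of a short Stage-1 learning phase followed by the oracle attack of \Cref{thm:OA}, and to argue three things separately: (i) Stage 1 recovers the true arm ranking with probability at least $1-\delta$, so that \aas and \tas produce exactly the sets $\mathcal{D}_0,\mathcal{K}_0,\mathcal{G}_0$ the oracle would have used; (ii) the cost of Stage 1 together with the one-time transition cost is bounded by the new additive term; and (iii) conditioned on a correct ranking, the remaining dynamics are governed verbatim by \Cref{thm:OA}. A union bound over the learning event and the oracle-attack event then yields the stated $1-2\delta$ guarantee.

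First I would establish learning correctness. During Stage 1 every arm is forced to accumulate at least $L=\ceil{2\log(2K/\delta)/\Delta_{\min}^2}$ recovered-to-unbiased samples before being reset. By sub-Gaussian concentration and a union bound over the $K$ arms, this choice of $L$ guarantees that every arm's empirical mean lies within $\Delta_{\min}/2$ of its true mean with probability at least $1-\delta$; since consecutive arms differ by at least $\Delta_{\min}$, on this event the induced empirical ordering equals the true ranking $\mu(1)>\cdots>\mu(K)$. Hence the outputs of \aas and \tas match the oracle's, and the hypotheses of \Cref{thm:OA} (including the $T_0$ condition) carry over unchanged.

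Next I would bound the learning and transition cost. By \Cref{cond:cM}, whenever some arm is still under-sampled at least $cM$ agents are incentivized to pull it, so each round advances the global count by at least $cM$ effective observations; thus Stage 1 terminates within $KL/(cM)$ rounds and manipulates at most $KL/c$ reward samples. Each such manipulation, together with the one-time cost of overwriting the unbiased empirical means of the arms in $\mathcal{K}_0$ when they are first pulled by $\mathcal{G}_0$ in Stage 2, is bounded by the largest feasible reward shift $\Delta(1,K)+\beta(1)+b$. Multiplying and using $L\le 4\log T/\Delta_{\min}^2$ produces the additive term $\tfrac{4K\log T}{c\Delta_{\min}^2}(\Delta(1,K)+\beta(1)+b)$.

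Finally I would combine the pieces. Conditioned on a correct ranking, Stage 2 is identical to \Cref{alg:OA}, so its regret lower bound and cost upper bound are inherited directly from \Cref{thm:OA}; because Stage 1 occupies only $O(\log T)$ rounds, it perturbs the regret by a lower-order amount and the $M_*(k)T-O(\log T)$ guarantee survives. Adding the learning/transition cost to the oracle cost and taking a union bound over the two failure events (each of probability at most $\delta$) gives the claimed bounds with probability at least $1-2\delta$. The main obstacle I anticipate is the transition analysis: after recovery the empirical means of the good arms in $\mathcal{K}_0$ reflect their true (high) means, so the first Stage-2 attack on each such arm must overpower this accumulated mass, and I must verify that this extra expense is captured by the $KL$-scale bound rather than growing with $t$, and that the subsequent dynamics re-enter the regime in which the $T_0$ condition \eqref{eq:solve_T_0} drives the oracle argument.
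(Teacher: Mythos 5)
Your proposal follows essentially the same route as the paper's proof: Hoeffding plus a union bound over the $K$ arms shows that $L$ samples per arm suffice to recover the true ranking with probability $1-\delta$; Condition~\ref{cond:cM} bounds the learning stage by $KL/(cM)$ rounds, and both the per-round Stage-1 manipulations and the one-time cost of overwriting the unbiased empirical means of arms in $\mathcal{K}_0$ at the start of Stage~2 are each bounded by $KL(\Delta(1,K)+\beta(1)+b)/c$, giving the first additive term; conditioned on the correct ranking (and noting that Stage~1 already guarantees $k_0^{(g(k))}$ is sufficiently sampled), Stage~2 inherits the bounds of \Cref{thm:OA}, and a union bound yields $1-2\delta$. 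The transition-cost issue you flag as a remaining obstacle is handled in the paper exactly as you anticipate, by the same $KL$-scale bound on the one-time adjustment.
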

Compared with the result of the oracle attack, the first term in the attack cost arises from the attacks during the rank learning stage, while the second term is the same as that in \Cref{thm:OA}. If there exists a constant $c$ such that \Cref{cond:cM} holds, the first term will still be in the order of $O(K \log T)$ and matches the cost caused by the oracle attack.
\section{Experiments}\label{sec:exp}
\begin{figure}[t]
    \centering
    \begin{subfigure}[b]{0.3\textwidth}
        \centering
        \includegraphics[width=\textwidth]{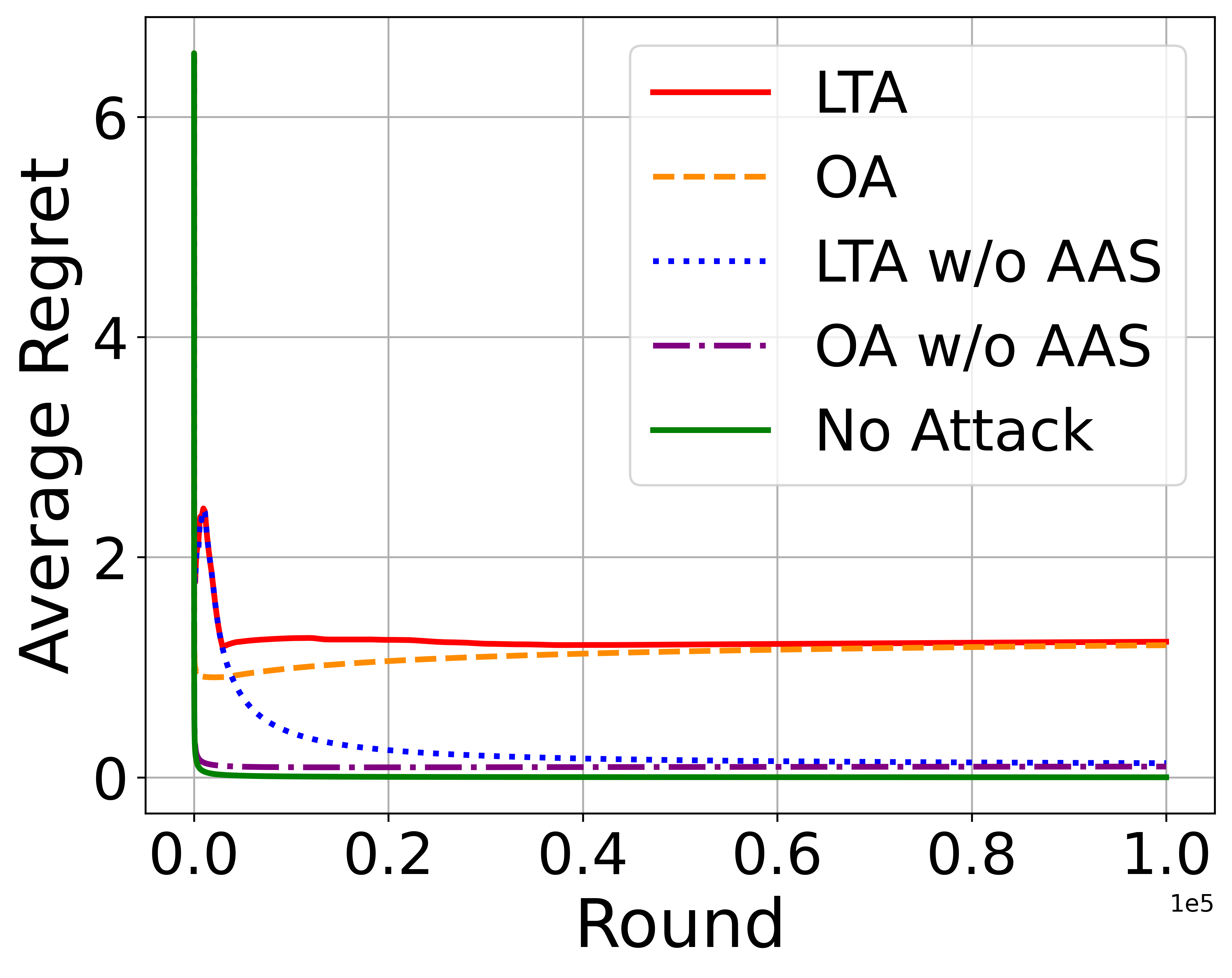}
        \caption{Average Regrets}
        \label{fig:COUCB-Regret}
    \end{subfigure}
    \begin{subfigure}[b]{0.29\textwidth}
        \centering
        \includegraphics[width=\textwidth]{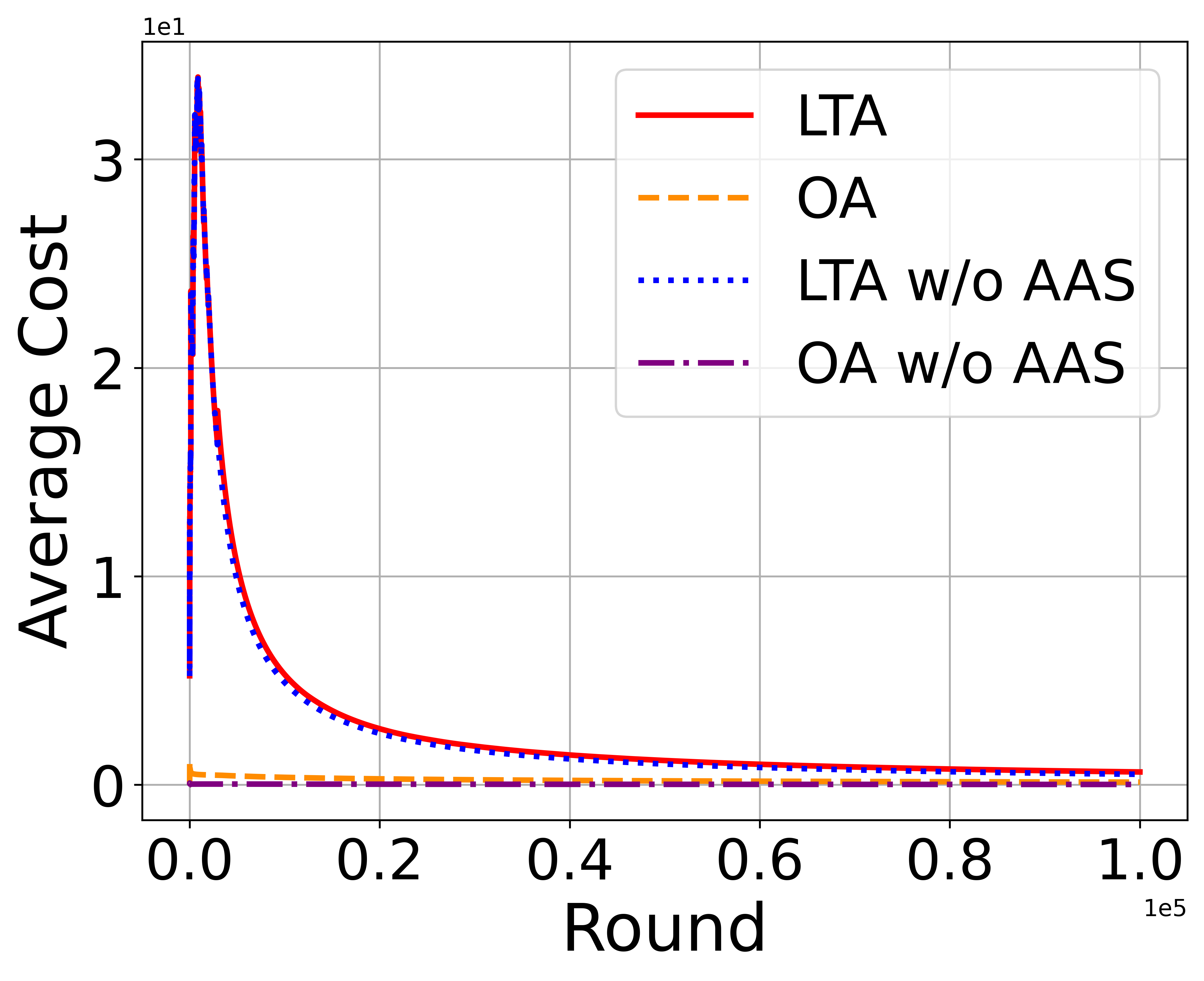}
        \caption{Average Costs}
        \label{fig:COUCB-Cost}
    \end{subfigure}
    \caption{Attacks against CO-UCB.}
    \label{fig:CO-UCB}
\end{figure}
We conduct experiments in both homogeneous and heterogeneous settings. Due to space limitations, we only present the results of heterogeneous settings here. We take $T = 100,000, K = 20, M = 20$. The mean rewards of the arms are randomly sampled within $\left(0, 5\right)$ while ensuring $\Delta_{\min}^2 \ge 0.01$, and the reward of each arm $k$ follows the Gaussian distribution $\mathcal{N}\left(\mu\left(k\right), \sigma\right)$ with $\sigma = 0.1$. Furthermore, each agent $m$ has a set of arms with $|\mathcal{K}\upbra{m}| = 5$. The CO-UCB algorithm takes $\alpha = 10$, and the attack parameters are set to $\Delta_0 = 0.05$ and $\delta = 0.1$. We conducted experiments with five algorithms for comparison: Oracle Attack (OA) with and without Affected Agents Selection (AAS), Leaning-Then-Attack (LTA) with and without AAS, and No Attack. Each experiment was repeated $10$ times.

\Cref{fig:COUCB-Regret} shows the average regret across various algorithms. Both LTA and OA result in the most significant average regrets, primarily due to AAS's capability to identify the most extensive group of affected agents. Without AAS, their average regrets converge to reduced constant values, indicating linear regrets for a limited subset of affected agents.
Figure \ref{fig:COUCB-Cost} shows the average attack costs of different algorithms. All of them approach zero, indicating sublinear cumulative attack costs. In particular, LTA initially incurs higher costs compared to OA, a consequence of the high attack costs during their learning stage.

\section{Concluding Remarks}
This paper pioneers the study of adversarial attacks on \matob. We introduce attack strategies that can effectively mislead \matob algorithms in both homogeneous and heterogeneous settings. One limitation of our work is that there is no known lower bound of the attack cost even for the single-agent bandit setting in the literature, so it is unclear whether our attack strategies are order-optimal. This work also opens up multiple future directions, including the development of attack strategies tailored for competitive multi-agent bandits featuring collisions. Furthermore, our insights into the vulnerabilities of current \matob algorithms set the stage for crafting algorithms resistant to adversarial attacks.

\bibliographystyle{unsrt}
\bibliography{0_reference}

\clearpage
\appendix
\onecolumn
\section*{Appendix}
% Proofs
% \begin{itemize}
%     \item Proof of Theorem 1
%     \item Proof of Propositions 1 and 2
%     \item Proof of Theorem 2
%     \item Proof of Theorem 3
%     \item Proof of Theorem 4
% \end{itemize}
% Additional Results in homogeneous settings
% \begin{itemize}
%     \item UCB-TCOM
%     \item Leader-follower
%     \item Additional experiments
% \end{itemize}

\section{Proofs}
\subsection{Proof of \texorpdfstring{\Cref{thm:homo}}{T1}}\label{homo-proof}
\begin{proof}
We first define the ``good event'' $E = \{\forall k, \forall t > K: |\hat{\mu}_t\upbra{0}(k) - \mu(k)| < \beta(\hat{n}_t(k))\}$, where $\hat{\mu}_t^0(k)$ is the pre-attack empirical mean of arm $k$ for all agents up to time slot $t$. By Hoeffding's inequality, we can prove that for any $\delta \in (0,1)$, $P(E) > 1- \delta$. To prove \cref{thm:homo}, we introduce two lemmas.

\begin{lemma}\label{lemma:homo-time}
Assume event $E$ holds and $\delta \le 1/2$. For any $k \neq K$ and any $t \ge K$, we have
\begin{equation}
    \hat{n}_t(k) \le \min\{\hat{n}_t(K), \frac{\alpha \log t}{2\Delta_0^2} \}
\end{equation}
\end{lemma}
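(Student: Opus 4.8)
The plan is to prove the two halves of the bound separately, both flowing from a single consequence of the CO-UCB selection rule combined with the attack rule~\eqref{equ:IC-inequality}. The one structural fact I will lean on throughout is that the target arm $K$ is never manipulated, so its post-attack empirical mean coincides with the pre-attack one, whereas every non-target arm $k$ is kept, by the attack, a margin below: $\hat{\mu}_t(k) \le \hat{\mu}_t(K) - 2\beta(\hat{n}_t(K)) - \Delta_0$. Writing the CO-UCB index of arm $k$ at round $s$ as $\hat{\mu}_{s-1}(k) + \mathrm{CB}_s(k)$ with confidence width $\mathrm{CB}_s(k) = \sqrt{\alpha \log s / (2\hat{n}_{s-1}(k))}$, I first record the key step: if a non-target arm $k$ is selected at round $s$, then $\mathrm{UCB}_s(k) \ge \mathrm{UCB}_s(K)$, and substituting the attack invariant for $\hat{\mu}_{s-1}(k)$ yields
\[
\mathrm{CB}_s(k) \ge 2\beta(\hat{n}_{s-1}(K)) + \Delta_0 + \mathrm{CB}_s(K).
\]
Because $\delta \le 1/2$ keeps $\beta(\cdot) \ge 0$, the right-hand side is at least $\Delta_0$ and, since $\Delta_0 > 0$, strictly exceeds $\mathrm{CB}_s(K)$; these two readings of the same inequality give the two claimed bounds.

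For the threshold bound, $\mathrm{CB}_s(k) \ge \Delta_0$ rearranges to $\hat{n}_{s-1}(k) \le \alpha \log s / (2\Delta_0^2) \le \alpha \log t / (2\Delta_0^2)$ at every round $s \le t$ where arm $k$ is pulled; hence arm $k$ can be selected only while its global count is below this threshold, which caps $\hat{n}_t(k)$ at $\alpha\log t/(2\Delta_0^2)$. For the domination bound, $\mathrm{CB}_s(k) > \mathrm{CB}_s(K)$ is equivalent to $\hat{n}_{s-1}(k) < \hat{n}_{s-1}(K)$, i.e.\ just before any pull of a non-target arm the target arm has already been pulled strictly more often. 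I would then run an induction over rounds: $\hat{n}(K)$ is nondecreasing, $\hat{n}(k)$ increases only on its own pulls, and each such pull starts from $\hat{n}_{s-1}(k) < \hat{n}_{s-1}(K)$, so after the increment $\hat{n}_s(k) \le \hat{n}_s(K)$ still holds, while rounds that do not pull $k$ preserve the inequality trivially. Seeding this at the initialization phase ($t = K$) and propagating it gives $\hat{n}_t(k) \le \hat{n}_t(K)$ for all $t \ge K$.

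The main obstacle I anticipate is justifying that the attack invariant~\eqref{equ:IC-inequality} genuinely persists between consecutive pulls of a non-target arm: it is enforced only when the attacked agent $m$ pulls $k$, yet honest observations from the other $M-1$ agents could in principle raise $\hat{\mu}_t(k)$ back above the target level at rounds where $m$ does not pull $k$. I would resolve this via the shared-statistics structure of CO-UCB, where all agents act on the common post-attack estimates, so a non-target arm whose empirical mean has been pushed below arm $K$'s is not selected again until its confidence width alone could close the gap, at which point the attacked agent is once more among its pullers and the invariant is re-imposed. The bookkeeping for how many agents pull $k$ within a single round (unit versus $M$ increments) only perturbs additive constants and therefore does not change the $\Theta(\log t)$ form of either bound; event $E$ is carried as a standing assumption (it is used chiefly in the subsequent cost estimate), and $\delta \le 1/2$ enters solely to keep $\beta(\cdot) \ge 0$ in the displayed inequality above.
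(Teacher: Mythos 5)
Your overall route is the paper's: compare the UCB indices of $k$ and $K$ at a round where $k$ is selected, substitute the attack invariant, and read the two claimed bounds off the resulting inequality between the confidence widths. The induction you sketch for $\hat{n}_t(k)\le\hat{n}_t(K)$ and the threshold reading of $\mathrm{CB}_s(k)\ge\Delta_0$ are both fine and match the paper (which is terser on the first point).

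The gap is in the substitution step itself. Your displayed inequality $\mathrm{CB}_s(k)\ge 2\beta(\hat{n}_{s-1}(K))+\Delta_0+\mathrm{CB}_s(K)$ presumes that \eqref{equ:IC-inequality} holds with the \emph{current} statistics at round $s-1$. But the attack only enforces it at the last round $t'$ at which $k$ was pulled; between $t'$ and $s-1$ the quantity $\hat{\mu}(k)$ is indeed frozen, while $\hat{\mu}(K)$ and $\hat{n}(K)$ are not --- arm $K$ keeps being pulled and its (unattacked) empirical mean can drift \emph{down}, so the margin certified at $t'$ does not automatically transfer to round $s-1$. You do flag that a persistence argument is needed, but you locate the danger in other agents' honest pulls of arm $k$ (which cannot occur between pulls of $k$, since under CO-UCB all agents share the same post-attack indices and move together, and whenever $k$ is pulled the attack is re-imposed on the global mean), and you then assert that event $E$ ``is used chiefly in the subsequent cost estimate.'' In fact event $E$ is exactly what closes this step: under $E$ one has $\hat{\mu}_{t'}(K)\le\mu(K)+\beta(\hat{n}_{t'}(K))$ and $\hat{\mu}_{s-1}(K)\ge\mu(K)-\beta(\hat{n}_{s-1}(K))$, and together with the monotonicity of $\beta$ and $\hat{n}_{t'}(K)\le\hat{n}_{s-1}(K)$ these show that the $2\beta(\hat{n}_{t'}(K))$ slack built into \eqref{equ:IC-inequality} absorbs the drift, yielding $\hat{\mu}_{s-1}(K)-\hat{\mu}_{s-1}(k)\ge\Delta_0$ --- without the extra $2\beta(\hat{n}_{s-1}(K))$ term you wrote, which is neither available nor needed. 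With that correction both of your conclusions go through unchanged.
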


\begin{proof}
Fix some $t > K$, we assume that $k_t\upbra{m} = k \neq K$. Note that when $k_t\upbra{m} \neq k$, $\hat{n}_t(k)$ will never increase. Also, we assume the last time arm $k$ is selected before time slot $t$ is $t'$. By our attack design in \cref{equ:IC-inequality} , we have:
\begin{equation}\label{eq:homo-time-1}
    \hat{\mu}_{t'}(k) \le \hat{\mu}_{t'}(K) - 2\beta(\hat{n}_{t'}(K)) -\Delta_0.
\end{equation}
On the other hand, arm $k$ is selected in round $t$ means that it has a higher UCB than arm $K$ in round $t-1$:
\begin{equation}\label{eq:homo-time-2}
    \hat{\mu}_{t-1}(k) + \sqrt{\frac{\alpha\log (t-1))}{2\hat{n}_{t-1}(k)}} \ge \hat{\mu}_{t-1}(K) + \sqrt{\frac{\alpha\log (t-1)}{2\hat{n}_{t-1}(K)}}.
\end{equation}
Note that $\hat{\mu}_{t'}(k) = \hat{\mu}_{t-1}(k)$. Substituting \cref{eq:homo-time-1} into \cref{eq:homo-time-2}, we have:
\begin{equation}\label{eq:homo-time-3}
    \begin{aligned}
        \sqrt{\frac{\alpha\log (t-1))}{2\hat{n}_{t-1}(k)}} - \sqrt{\frac{\alpha\log (t-1)}{2\hat{n}_{t-1}(K)}} & \ge \hat{\mu}_{t-1}(K) - \hat{\mu}_{t-1}(k) \\
        & \ge \hat{\mu}_{t-1}(K) - (\hat{\mu}_{t'}(K) - 2\beta(\hat{n}_{t'}(K)) -\Delta_0)  \\
        & \ge \Delta_0 \\
        & > 0,
    \end{aligned}
\end{equation}
where the third inequality is due to event $E$ and the monotonically decreasing property of $\beta$. Therefore, by \cref{eq:homo-time-3} and $\sqrt{\frac{\alpha\log (t-1)}{2\hat{n}_{t-1}(K)}} >0$, the proof is done.
\end{proof}

\begin{lemma}\label{lemma:homo-cost}
Assume event $E$ holds and $\delta \le 1/2$. Denote $\tau(t,k)$ as the rounds in which arm $k$ is selected by all agents up to round $t$. For any $i \neq K$ and any $t \ge K$, we have
\begin{equation}
    \sum_{s \in \tau(t,k)} \gamma\upbra{m}(s) \le \hat{n}_{t}(k)(\Delta(k, K) + \Delta_0 + 3 \beta(\hat{n}_t(K)) + \beta(\hat{n}_t(k)))
\end{equation}
\end{lemma}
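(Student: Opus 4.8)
The plan is to reduce the bound on the cumulative attack to a bound on how far the attack has pushed arm $k$'s empirical mean below its true value. Since the attacker only ever subtracts mass from agent $m$'s observations of arm $k$, the cumulative post-attack sum for $k$ equals the pre-attack sum minus the total injected attack. Writing $\hat\mu^0_t(k)$ for the pre-attack empirical mean, this gives the accounting identity $\sum_{s\in\tau(t,k)}\gamma^{(m)}(s) = \hat n_t(k)\big(\hat\mu^0_t(k)-\hat\mu_t(k)\big)$, so it suffices to show that $\hat\mu^0_t(k)-\hat\mu_t(k)\le \Delta(k,K)+\Delta_0+3\beta(\hat n_t(K))+\beta(\hat n_t(k))$, i.e.\ to upper bound the pre-attack mean and lower bound the post-attack mean.

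The two means are then controlled separately. For the pre-attack mean, the good event $E$ gives immediately $\hat\mu^0_t(k)\le \mu(k)+\beta(\hat n_t(k))$, which supplies the $\mu(k)$ and the $\beta(\hat n_t(k))$ contributions. For the post-attack mean I would use the attack rule \eqref{equ:IC-inequality}: at any round where the attack on $k$ actually fires, a minimal-cost attacker lowers $\hat\mu_t(k)$ exactly to the threshold $\hat\mu_t(K)-2\beta(\hat n_t(K))-\Delta_0$ and no further, so there $\hat\mu_t(k)=\hat\mu_t(K)-2\beta(\hat n_t(K))-\Delta_0$. Applying $E$ to the target arm, $\hat\mu_t(K)\ge \mu(K)-\beta(\hat n_t(K))$, upgrades this to $\hat\mu_t(k)\ge \mu(K)-3\beta(\hat n_t(K))-\Delta_0$. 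Subtracting the two displays and using $\mu(k)-\mu(K)=\Delta(k,K)$ yields precisely the claimed right-hand side, with the $3\beta(\hat n_t(K))$ term tracing back to the threshold plus the confidence slack on arm $K$ and the $\beta(\hat n_t(k))$ term to the concentration of $k$ itself.

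The main obstacle is making the lower bound on $\hat\mu_t(k)$ rigorous at an arbitrary round $t$, not merely at a round where $k$ is freshly attacked. In the homogeneous setting other, un-attacked, agents may also pull $k$ between agent $m$'s attacked pulls, injecting clean samples, while the global count of arm $K$ keeps growing, so the threshold that $\hat\mu_t(k)$ was pinned to shifts over time. I would handle this by freezing the cumulative attack at the last round $t'\le t$ at which the attack on $k$ actually fires---after which $\sum_{s\in\tau(t,k)}\gamma^{(m)}(s)$ no longer changes---and evaluating the two concentration bounds at $t'$. The remaining delicate step is transporting the confidence radii from $\hat n_{t'}(\cdot)$ to $\hat n_t(\cdot)$ and consolidating the $3\beta(\hat n_t(K))$ and $\beta(\hat n_t(k))$ terms; I expect to discharge this using \Cref{lemma:homo-time} (which gives $\hat n_t(k)\le \hat n_t(K)$, hence $\beta(\hat n_t(K))\le\beta(\hat n_t(k))$) together with the monotonicity of $N\mapsto N\beta(N)$. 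This confidence-radius bookkeeping across the gap between $t'$ and $t$, rather than any single inequality, is where the care is needed.
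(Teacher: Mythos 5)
Your proposal is correct and follows essentially the same route as the paper's proof: the accounting identity $\sum_{s\in\tau(t,k)}\gamma\upbra{m}(s)=\hat n_t(k)\bigl(\hat\mu^0_t(k)-\hat\mu_t(k)\bigr)$, the attack rule \eqref{equ:IC-inequality} holding with equality at the last round where the attack fires, and event $E$ applied to both arms to produce $\Delta(k,K)+\Delta_0+3\beta(\hat n_t(K))+\beta(\hat n_t(k))$. The ``transport'' step you flag is handled the same way in the paper, implicitly, by invoking \Cref{lemma:homo-time} to replace $\beta(\hat n_t(K))$ with $\beta(\hat n_t(k))$ and the monotonicity of $N\mapsto N\beta(N)$ when the per-arm bounds are summed in the theorem.
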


\begin{proof}
We assume that $k_{t}\upbra{m} = k \neq K$ for some fixed $t > K$. By \cref{equ:IC-inequality}, we can compute the attack value in each round $t$:
\begin{equation}\label{eq:homo-cost-1}
    \begin{aligned}
        \gamma\upbra{m}(t) & = \left(\hat{\mu}_{t-1}(k)\hat{n}_{t-1}(k)+\sum_{m'=1}^MX_t\upbra{m',0}(k)-\hat{n}_{t}(k)(\hat{\mu}_{t}(K)-2\beta(\hat{n}_{t}(K))-\Delta_0)\right)_+ \\
        & = \left(\hat{\mu}_{t-1}\upbra{0}(k)\hat{n}_{t-1}(k)+\sum_{m'=1}^MX_t\upbra{m',0}(k) - \sum_{s\in \tau(t-1,k)}\gamma\upbra{m}(s) - \hat{n}_{t}(k)(\hat{\mu}_{t}(K)-2\beta(\hat{n}_{t}(K))-\Delta_0)\right)_+.
    \end{aligned}
\end{equation}

If $\gamma^{(m)}(t) = 0$, we change to examine the last time it was greater than zero. We have
\begin{equation}\label{eq:homo-cost-2}
    \begin{aligned}
        \sum_{s\in \tau(t,k)}\gamma\upbra{m}(s) & = \hat{\mu}_{t-1}\upbra{0}(k)\hat{n}_{t-1}(k)+\sum_{m'=1}^MX_t\upbra{m',0}(k) - \hat{n}_{t}(k)(\hat{\mu}_{t}(K)-2\beta(\hat{n}_{t}(K))-\Delta_0) \\
        & = \hat{n}_{t}(k)(\hat{\mu}_{t}\upbra{0}(k) + \hat{\mu}_{t}(K)+2\beta(\hat{n}_{t}(K))+\Delta_0) \\
        & \le \hat{n}_{t}(k)(\Delta(k, K) + \Delta_0 + 3 \beta(\hat{n}_t(K)) + \beta(\hat{n}_t(k))),
    \end{aligned}
\end{equation}
where the last inequality is due to event $E$.
\end{proof}
With \cref{lemma:homo-time}, we can easily get that the target arm $K$ is selected for at least $MT - (K-1)(\frac{\alpha}{2\Delta_0^2}\log T)$ times. As for the cumulative cost, we sum over all non-target arms using \cref{lemma:homo-cost}. We have:
\begin{equation}\label{eq:homo-cost}
    \begin{aligned}
        \sum_{t}^{T} \gamma\upbra{m}(t) & \le \sum_{k = 1}^{K-1}\hat{n}_t(k)(\Delta(k,K)+\Delta_0) + 4\sum_{k = 1}^{K-1} \hat{n}_t(k) \beta(\hat{n}_t(k))\\
        & \le \left(\frac{\alpha}{2\Delta_0^2} \log T\right)\sum_{k<K}(\Delta(k,K) + \Delta_0)   + \frac{4(K-1)\sigma}{\Delta_0}\sqrt{\log T\log\frac{K\pi^2\alpha^2(\log T)^2}{12\delta\Delta_0^4}},
    \end{aligned}
\end{equation}
where in the last inequality, we substitute the chosen times of each arm $k$, $\hat{n}_t(k)$, into $\beta$.
\end{proof}

\subsection{Proof of \texorpdfstring{\Cref{prop:obj_1}}{T1}}\label{sec:obj1:proof}
The proofs of \Cref{prop:obj_1} and \Cref{prop:obj_2} are inspired by the proof of Theorem 2 in \cite{zuo2020near}.

\begin{proof}
Assume arm $3$ is our target arm. We first introduce some notations. Let $\gamma(t,m,k) = X_t\upbra{m,0}(k)-X_t\upbra{m}(k)$. Note that if $k_t\upbra{m} \neq k$, then $\gamma(t,m,k) = 0$. In addition, let $\Gamma(t,k) = \sum_{s=1}^t\sum_{m=1}^2|\gamma(s,m,k)|$.

Assume event $E$ holds. Consider the last time that arm $3$ is pulled (by agent $1$) is round $t + 1$. As the agent selects the arm with the highest UCB in time slot $t$, we have:
\begin{equation}\label{eq:obj1-1}
    \begin{aligned}
        \hat{\mu}_t(3) + \sqrt{\frac{\alpha\log t}{2\hat{n}_t(3)}} \ge \hat{\mu}_t(k) + \sqrt{\frac{\alpha\log t}{2\hat{n}_t(k)}},
    \end{aligned}
\end{equation}
for $k = 1, 2$. Also, we have the following two inequalities:
\begin{equation}\label{eq:obj1-2}
    \begin{aligned}
        \frac{\hat{\mu}_t(3)\hat{n}_t(3) - \Gamma(t,3)}{\hat{n}_t(3)} \le \mu(3) + \beta(\hat{n}_t(3)),
    \end{aligned}
\end{equation}
\begin{equation}\label{eq:obj1-3}
    \begin{aligned}
        \frac{\hat{\mu}_t(k)\hat{n}_t(k) + \Gamma(t,k)}{\hat{n}_t(k)} \ge \mu(k) - \beta(\hat{n}_t(k)),
    \end{aligned}
\end{equation}
for $k = 1, 2$. This is because we consider the absolute values of both positive and negative instances of $\gamma$ up to round $t$ and also utilize event $E$ to upper and lower bound the empirical means. Note that arm $3$ can only be selected by agent $1$. Therefore, we have $\hat{n}_t(3) \le t$, and $\max\{\hat{n}_t(1),\hat{n}_t(2)\} \ge t/2$. Let $i = \argmax_{k=1,2}\{\hat{n}_t(k)\}$. Then, we can construct an instance that needs linear cumulative cost to pull the target arm for linear times.

Suppose $\Delta(2,3) \ge 4, \sigma = 0.5$. As $\hat{n}_T(3) = T-o(T)$, we can assume that there exists some constant $\frac{\sqrt{\delta}}{\pi} < c < 1$, which satisfies $\min\{\hat{n}_t(i), \hat{n}_t(3)\} \ge ct$. Putting \cref{eq:obj1-1,eq:obj1-2,eq:obj1-3} together, we have:
\begin{equation}\label{eq:obj1-4}
    \begin{aligned}
        \frac{\Gamma(t,3)}{\hat{n}_t(3)} + \frac{\Gamma(t,i)}{\hat{n}_t(i)} & \ge \Delta(i,3) - \sqrt{\frac{\alpha\log t}{2\hat{n}_t(3)}} - \beta(\hat{n}_t(3)) - \beta(\hat{n}_t(i)) \\
        & \ge \Delta(2,3) - 3\beta(ct) \\
        & > \Delta(2,3) - 3\sqrt{\frac{\delta}{\pi t}\log\frac{\pi t}{\delta}} \\
        & > 1,
    \end{aligned}
\end{equation}
where the second inequality holds because $\Delta(i,3) \ge \Delta(2,3)$, and we have $\sqrt{\frac{\alpha\log t}{2\hat{n}_t(3)}} \le \beta(\min\{\hat{n}_t(i), \hat{n}_t(3)\}) \le \beta(ct)$ when $\frac{\sqrt{\delta}}{\pi} < c < 1$, as $\beta$ is monotonically decreasing.  The third inequality can be derived from the following result:
\begin{equation*}\textstyle
        \begin{aligned}
            \beta(ct) & =
              \sqrt{\frac{1}{2ct}\log\frac{\pi^2c^2t^2}{\delta}} \\
             & \overset{c<1<\frac{1}{\sqrt{\delta}}}{\le} \sqrt{\frac{1}{2ct}\log(\frac{\pi t}{\delta})^2} \\
             & \overset{c>\frac{\sqrt{\delta}}{\pi}>\frac{\delta}{\pi}}{\le} \sqrt{\frac{\delta}{\pi t}\log\frac{\pi t}{\delta}},
        \end{aligned}
    \end{equation*}
and the last inequality in \cref{eq:obj1-4} is due to $\sqrt{\frac{\log x}{x}} < 1$ for any $x\ge 1$. Therefore, the cumulative cost is
\begin{equation}\label{eq:obj1-5}
    \begin{aligned}
        C(T) \ge \Gamma(T,3) + \Gamma(T,i) \ge cT.
    \end{aligned}
\end{equation}
\end{proof}

\subsection{Proof of \texorpdfstring{\Cref{prop:obj_2}}{T1}}
\begin{proof}
We use the same notations in \cref{sec:obj1:proof} and assume event $E$ holds. In this scenario, we assume that both arms $2$ and $3$ should be selected for linear times; otherwise, one of these agents will not suffer linear regret.

Consider the last time arm $3$ is pulled (by agent $2$) is $t+1$. Then we have the UCB order in time slot $t$:
\begin{equation}\label{eq:obj2-1}
    \begin{aligned}
        \hat{\mu}_t(3) + \sqrt{\frac{\alpha\log t}{2\hat{n}_t(3)}} \ge \hat{\mu}_t(2) + \sqrt{\frac{\alpha\log t}{2\hat{n}_t(2)}}.
    \end{aligned}
\end{equation}
Also, we have the following two inequalities for the same reason of \cref{eq:obj1-2,eq:obj1-3}:
\begin{equation}\label{eq:obj2-2}
    \begin{aligned}
        \frac{\hat{\mu}_t(3)\hat{n}_t(3) - \Gamma(t,3)}{\hat{n}_t(3)} \le \mu(3) + \beta(\hat{n}_t(3)),
    \end{aligned}
\end{equation}
\begin{equation}\label{eq:obj2-3}
    \begin{aligned}
        \frac{\hat{\mu}_t(2)\hat{n}_t(2) + \Gamma(t,2)}{\hat{n}_t(2)} \ge \mu(2) - \beta(\hat{n}_t(2)).
    \end{aligned}
\end{equation}

Suppose $\Delta(2,3) \ge 4, \sigma = 0.5$. As $\hat{n}_T(k) = T-o(T), k=2,3$, we can assume that there exists some constant $\frac{\sqrt{\delta}}{\pi} < c < 1$, which satisfies $\min\{\hat{n}_t(2), \hat{n}_t(3)\} \ge ct$. Put \cref{eq:obj2-1,eq:obj2-2,eq:obj2-3} together, we have:
\begin{equation}\label{eq:obj2-4}
    \begin{aligned}
        \frac{\Gamma(t,3)}{\hat{n}_t(3)} + \frac{\Gamma(t,2)}{\hat{n}_t(2)} & \ge \Delta(2,3) - \sqrt{\frac{\alpha\log t}{2\hat{n}_t(3)}} - \beta(\hat{n}_t(3)) - \beta(\hat{n}_t(2)) \\
        & \ge \Delta(2,3) - 3\beta(ct) \\
        & > \Delta(2,3) - 3\sqrt{\frac{\delta}{\pi t}\log\frac{\pi t}{\delta}} \\
        & > 1,
    \end{aligned}
\end{equation}
with the reason similar to that for \cref{eq:obj1-4}. Therefore, the cumulative cost is
\begin{equation}\label{eq:obj2-5}
    \begin{aligned}
        C(T) \ge \Gamma(T,3) + \Gamma(T,2) \ge cT.
    \end{aligned}
\end{equation}
\end{proof}

\subsection{Proof of \texorpdfstring{\Cref{thm:aas}}{T1}}\label{aas-proof}
\begin{proof} We define a set function $f : 2^{\Omega} \mapsto \mathbb{R}$, which takes the power set of $\Omega := \{\mathcal{M}_*(1), \cdots, \mathcal{M}_*(K)\}$ as input and outputs the largest number of conflict-free agents. We then check its submodularity. For every $X,Y \subseteq \Omega$ with $X \subseteq Y $ and every $x \in \Omega \setminus Y$, we have
\begin{equation}
    f(X \cup \{x\}) - f(X) \ge f(Y \cup \{x\}) - f(Y),
\end{equation}
since if agent $m \in x$ conflicts with any agent $m' \in X$, it must conflict with $m' \in Y$ as well. As a result, $f$ is submodular and the greedy algorithm in \Cref{alg:aas} gives a $(1-1/e)$-approximate solution.

\end{proof}

\subsection{Proof of \texorpdfstring{\Cref{thm:OA}}{T1}}\label{oa-proof}
\begin{proof}
% Formally, we prove that for any $t > T_0, k\in \mathcal{K}_0$, our target agents in $\mathcal{G}_0$ will prioritize pulling arm $k$ before non-target agents, assuring enough attack opportunities. The value of $T_0$ is calculated by \cref{eq:solve_T_0}. To elucidate, for each arm $k\in \mathcal{K}_0$, target agent $g(k)$ is mainly responsible for attacking it. If $g(k)$'s best arm excluding those in $\mathcal{K}_0$, $k_0^{(g(k))}$, has been sufficiently pulled, i.e., $\hat{n}_t(k_0^{(g(k))}) \ge c_{k,2} \log t$, it will consistently be the first to pull arm $k$ in $\mathcal{D}_{0,*}(k)$. We then bound the number of times that $g(k)$ does not pull $k_0^{(g(k))}$: it can pull local arms worse than $k_0^{(g(k))}$ at most $c_{k,1} \log t$ times, and local arms within $\mathcal{K}_0$ up to $c_{k,3} \log t$ times. Hence, $\hat{n}_t(k_0^{(g(k))}) \ge t - c_{k,1} \log t - c_{k,3} \log t$, and we want $t - c_{k,1} \log t - c_{k,3} \log t \ge c_{k,2} \log t$. We need to ensure this equation for every $k$, leading to \cref{eq:solve_T_0}. Since the right-hand side of \cref{eq:solve_T_0} is a problem-dependent coefficient, we can derive $T_0$ which is independent of $t$. For subsequent rounds $t > T_0$, it is easy to prove the successful attacks with bounded attack costs.
For each arm $k\in \mathcal{K}_0$, we consider the target agent $g(k)$, which is mainly responsible for attacking $k$. We first prove that if $k_0^{(g(k))}$ has been sufficiently pulled, i.e., $\hat{n}_t(k_0^{(g(k))}) \ge c_{k,2} \log t$, $g(k)$ will consistently be the first to pull arm $k$ in $\mathcal{D}_{0,*}(k)$. With $\hat{n}_t(k_0^{(g(k))}) \ge c_{k,2} \log t$, for every $m \in\mathcal{D}_{0,*}(k)$ such that $k_0\upbra{m}\neq k_0\upbra{g(k)}$, we have
\begin{equation}
        UCB_t(k_0^{(g(k))})  = \hat{\mu}_t(k_0^{(g(k))}) + \sqrt{\frac{\alpha \log t}{2\hat{n}_t(k_0^{(g(k))})}} \le \mu(k_0^{(g(k))}) + \Delta(k_0\upbra{m}, k_0\upbra{g(k)}) = \mu(k_0\upbra{m}) \le UCB_t(m).
\end{equation}
Since $k_0^{(g(k))}, k_0\upbra{m}$ are the local optimal arms excluding all arms in $\mathcal{K}_0$ and $UCB_t(k_0^{(g(k))})$ is always less or equal to $ UCB_t(k_0\upbra{m})$, target agent $g(k)$ will pull arm $k$ earlier than any agent $m$, assuring enough attack opportunities.

Next, we want to derive a lower bound of $\hat{n}_t(k_0^{(g(k))})$ for $g(k)$. There are two cases in which $g(k)$ does not pull $k_0^{(g(k))}$: it can pull arms either in $\mathcal{K}_{0}^{(g(k))}\setminus \{k_{0}^{(g(k))}\}$ or in $\mathcal{K}^{(g(k))} \cap \mathcal{K}_0$. We first consider the former case. Since $k_{0}^{(g(k))}$ is the optimal arm in $\mathcal{K}_{0}^{(g(k))}$, the number of pulls of these suboptimal arms can be bounded by
\begin{equation}
    \sum_{k' \in \mathcal{K}_{0}^{(g(k))}\setminus \{k_{0}^{(g(k))}\}} n^{(g(k))}_t(k')\le \sum_{k' \in \mathcal{K}_{0}^{(g(k))}\setminus \{k_{0}^{(g(k))}\}} \frac{\alpha \log t}{2\Delta^2(k_{0}^{(g(k))}, k')}
    = c_{k,1} \log t,
\end{equation}
where $n^{(g(k))}_t(k')$ is the number of times that agent $g(k)$ pulls arm $k'$. The inequality comes from the upper bound of the suboptimal arm pulls for UCB algorithms. We then discuss the latter case in which $k' \in \mathcal{K}^{(g(k))} \cap \mathcal{K}_0$ is pulled. If $\hat{n}_t(k') \ge \frac{\alpha \log t}{2\Delta^2_0}$, for any non-target agent $m \notin \mathcal{G}_0$, there always exists $k'' \in \mathcal{K}^{(m)} \setminus \mathcal{K}_0$ such that
\begin{equation}
    UCB_t(k') = \hat{\mu}_t(k') + \sqrt{\frac{\alpha \log t}{2 \hat{n}_t(k')}} \le \mu(k'') - \Delta_0 +  \sqrt{\frac{\alpha \log t}{2 \hat{n}_t(k')}} \le \mu(k'') \le UCB_t(k''),
\end{equation}
where the first inequality is due to our attack design in \cref{eq:OA_attack}. As a result, non-target agents will not pull $k'$ anymore. \cref{eq:OA_attack} also ensures the number of pulls by target agents after $\hat{n}_t(k') \ge \frac{\alpha \log t}{2\Delta^2_0}$ is bounded by $\frac{\alpha \log t}{2\Delta^2_0}$ (similar to the proof of \Cref{lemma:homo-time}). Thus, the total number of arm pulls for all $k' \in \mathcal{K}^{(g(k))} \cap \mathcal{K}_0$ from agent $g(k)$ is upper bounded by $|\mathcal{K}^{(g(k))} \cap \mathcal{K}_0| \cdot \frac{2\alpha \log t}{2\Delta_0^2} = c_{k,3} \log t$. Then for every $k\in \mathcal{K}_0$, we want 
\begin{equation}
    \hat{n}_t(k_0^{(g(k))}) \ge t - c_{k,1}\log t - c_{k,3} \log t \ge c_{k,1} \log t.
\end{equation}
With a feasible $T_0$ such that 
\begin{equation}
        \frac{T_0}{\log T_0} \ge \max_{k \in \mathcal{K}_0} c_{k},
\end{equation}
for any $t > T_0$, arms in $\mathcal{K}_0$ will only be pulled by target agents in $\mathcal{G}_0$, and these agents will conduct attacks according to \cref{eq:OA_attack}. When $t > T_0$, for every $k \in \mathcal{K}_0$, we can follow the same steps in the proof of \Cref{lemma:homo-time} and obtain
\begin{equation}
    \hat{n}_t(k) \le \frac{\alpha \log t}{2\Delta_0^2}.
\end{equation}
Based on this, the cost upper bound in \Cref{thm:OA} can be easily derived by following the same steps in the proof of  \Cref{lemma:homo-cost}, which concludes the proof.
\end{proof}

\subsection{Proof of \texorpdfstring{\Cref{thm:LTA}}{T1}}\label{lta-proof}
\begin{proof}
We first prove that at the end of the learning stage, \Cref{alg:LTA} can learn the correct mean reward ranking of all arms. For every arm $k \in \mathcal{K}$, we have $\hat{n}_t(k) \ge L = \ceil*{\frac{2\log (2K/\delta)}{\Delta^2_{\min}}}$. By Hoeffding’s inequality,
\begin{equation}
    |\hat{\mu}_t(k) - \mu(k)| \le \frac{\Delta_{\min}}{2}
\end{equation}
with probability $1-\delta$, which indicates that sorting all arms according to $\hat{\mu}_t(k)$ will give the correct ranking.

We then consider the attack cost incurred during the learning stage. As discussed in \Cref{subsec:learning-then-attack}, the learning stage ends after no more than \(\frac{KL}{cM}\) rounds. Since the attack value per round is upper bound by $\Delta(1,K)+\beta(1)+b$, the total attack cost of $M$ agents is bounded by \(\frac{MKL(\Delta(1,K)+\beta(1)+b)}{cM}\), where $b$ is the upper bound of the mean rewards. 
In addition, \Cref{alg:LTA} also needs to pay a significant cost $\gamma_t^{(m)}$ to alter the unbiased empirical mean of $k \in \mathcal{K}_0$ for the first time that $k$ is attacked by $m \in \mathcal{G}_0$ during the second stage. This is due to the large value of $\hat{n}_{t-1}(k)$ when calculating $\hat{\mu}_{t}(k)$, which necessities a relatively large $\gamma_t^{(m)}$ to ensure \cref{eq:OA_attack}. Since this cost can still be upper bounded by \(\frac{KL(\Delta(1,K)+\beta(1)+b)}{c}\), the total additional cost induced by the learning stage is \(\frac{2KL(\Delta(1,K)+\beta(1)+b)}{c}\), which appears as the first term of $C(T)$ in \Cref{thm:LTA}. Notice that the learning stage of \Cref{alg:LTA} directly ensures that  $k_0^{(g(k))}$ has been sufficiently pulled for every $k\in \mathcal{K}_0$. Thus, the cost of the attack stage is the same as that in \Cref{alg:OA}, which concludes the proof.

\end{proof}

\section{Additional Results in Homogeneous Settings}
\subsection{Attacks against UCB-TCOM}
As mentioned in Section \ref{sec:homo}, UCB-TCOM stands as the state-of-art algorithm in homogeneous settings. It boasts near-optimal regret, with communication costs limited to just $O(\log\log T)$. The key ideas that help UCB-TCOM algorithm to decrease the communication cost are: first, communications occur only when there are sufficient local samples for the agents; second, information about the optimal arm is not directly broadcast. In this section, we demonstrate that our attack strategy is also effective against the UCB-TCOM algorithm.

To begin with, we first discuss why we need to slightly change \cref{equ:IC-inequality}. This adjustment is required due to the consideration of delayed information. Under the UCB-TCOM strategy, all $M$ agents simultaneously select an arm \(k\) for multiple consecutive rounds until the number of samples \(\hat{n}_t(k)\) exceeds a predefined threshold. We refer to these consecutive rounds as a \emph{phase}. The agents share their local information and update $\hat{\mu}_t(k)$, $\hat{n}_t(k)$, and UCB values at the end of each phase. As a result, it becomes essential to compute the attack value carefully in each round, accounting for the delayed counters. Assume arm $k_t\upbra{m} = k \neq K$ for all agents $m \in \mathcal{M}$, where $t$ belongs to a phase from round $s+1$ to $r$, and denote the last phase that arm $k$ is selected ends at as round $t'$. We provide the condition in \cref{equ:IC-inequality} again here:
\begin{equation}\label{eq:TCOM-1}
	\hat{\mu}_{t}(k) \leq \hat{\mu}_t(K) - 2\beta(\hat{n}_t(K)) - \Delta_0,
\end{equation}
while $\hat{\mu}_t(K) = \hat{\mu}_s(K)$ and $\hat{n}_t(K) = \hat{n}_s(K)$ and they can be computed in round $t$. In addition, we can compute the true value of $\hat{\mu}_t(k)$ even if the agents will not update this value. Assume that we only attack agent $m \in \mathcal{M}$. Then, we can compute the attack values:
\begin{equation}\label{eq:TCOM-2}
	\begin{aligned}
		 \gamma\upbra{m}(t) = &  (\hat{\mu}_{t'}(k)\hat{n}_{t'}(k) + \sum_{h=s+1}^t\sum_{m'=1}^MX_{h}\upbra{m',0}(k) - \sum_{h=s+1}^{t-1}\gamma\upbra{m}(h)        \\
		 & - (\hat{n}_{t'}(k)+(t-s)M)          (\hat{\mu}_t(K)-2\beta(\hat{n}_t(K))-\Delta_0))_+,
	\end{aligned}
\end{equation}
where $(x)_+$ represents the maximum of $x$ and $0$. It is worth noting that \cref{eq:TCOM-2} handles samples from both the previous phase and the current one separately. The latter requires special consideration due to the delayed updates. The entire process is outlined in \cref{alg:TCOM}.
\begin{algorithm}[t]
	\caption{Attack against UCB-TCOM (Agent \(m\))}\label{alg:TCOM}
	\begin{algorithmic}[1]
		\State \textbf{Initialization}: $\hat{\mu}_t(k) = \hat{n}_t(k) = 0$ for all $k \in [K]$
		\For{$t = 1,2,3,\dots,T$}
		\State Attacker observes that agent selects $k_t\upbra{m}$ by UCB
		\State Environment reveals reward $X_t\upbra{m,0}(k_t\upbra{m})$
		\If {$k_t\upbra{m} \neq K$}
		\State Attacker manipulates reward $X_t\upbra{m}(k_t\upbra{m}) = X_t\upbra{m,0}(k_t\upbra{m}) - \gamma\upbra{m}(t)$ according to \cref{eq:TCOM-2}
		\EndIf
		% \State Agent updates its local information and decides whether it broadcasts message
		\EndFor
		% \For{each newly received message from last round}
		% \State Update $\hat{\mu}_t\upbra{0}(k)$, $\hat{n}_t\upbra{0}(k)$, and $C_t(\alpha)$
		% \EndFor
	\end{algorithmic}
\end{algorithm}

\begin{theorem}\label{thm:TCOM}
	Suppose $T > K$, set the parameters of UCB-TCOM as $\beta > 1$ and $\delta < 1/2$. With probability at least $1 - \delta$, \Cref{alg:TCOM} misguides the UCB-TCOM algorithm to choose the target arm $K$ at least
	$MT - (K-1)\left(\frac{2\beta}{\Delta_0^2} \log T\right)$ rounds, using a a cumulative attack cost at most
	\begin{equation*}\textstyle
		\begin{aligned}
			C(T) \le
\left(\frac{2\beta}{\Delta_0^2} \log T\right)\sum_{k<K}(\Delta(k,K) + \Delta_0) + \frac{8(K-1)\sigma^2}{\Delta_0^2}\sqrt{\beta\log T\log\frac{4K\beta^2\pi^2(\log T)^2}{3\delta\Delta_0^4}}.
		\end{aligned}
	\end{equation*}
\end{theorem}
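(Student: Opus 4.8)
The plan is to follow the same two-lemma structure as the proof of \Cref{thm:homo}, adapting both the per-round time bound and the per-round cost bound to accommodate the phased (delayed-update) communication of UCB-TCOM. First I would re-establish the good event $E = \{\forall k, \forall t > K: |\hat{\mu}_t\upbra{0}(k) - \mu(k)| < \beta(\hat{n}_t(k))\}$, which still holds with probability at least $1-\delta$ by Hoeffding's inequality, since the pre-attack samples are unchanged by the communication schedule. The only structural difference is that the UCB confidence radius in UCB-TCOM uses the exploration parameter $\beta > 1$ in place of $\alpha/2$, which is why the final bounds carry $2\beta$ where \Cref{thm:homo} carries $\alpha$.

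The heart of the argument is a phase-aware analogue of \Cref{lemma:homo-time}. I would fix a phase running from round $s+1$ to $r$ during which all agents pull arm $k \neq K$, and let $t'$ be the end of the last phase in which $k$ was selected. The attack condition \eqref{eq:TCOM-1} enforces $\hat{\mu}_{t}(k) \le \hat{\mu}_t(K) - 2\beta(\hat{n}_t(K)) - \Delta_0$, where crucially $\hat{\mu}_t(K) = \hat{\mu}_s(K)$ and $\hat{n}_t(K) = \hat{n}_s(K)$ are the stale values from the phase boundary. Since arm $k$ was selected over arm $K$, its UCB must dominate, giving an inequality of the form $\sqrt{\tfrac{2\beta \log t}{\hat{n}_t(k)}} - \sqrt{\tfrac{2\beta \log t}{\hat{n}_t(K)}} \ge \Delta_0 > 0$ after substituting the attack condition and applying event $E$ together with monotonicity of $\beta$. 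This forces $\hat{n}_t(k) \le \tfrac{2\beta}{\Delta_0^2}\log t$, and summing over the $K-1$ non-target arms yields the claimed lower bound $MT - (K-1)\bigl(\tfrac{2\beta}{\Delta_0^2}\log T\bigr)$ on the pulls of arm $K$.

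For the cost I would mirror \Cref{lemma:homo-cost}, but compute the attack value directly from \eqref{eq:TCOM-2}, which separates the contribution of the frozen previous-phase counter $\hat{n}_{t'}(k)$ from the fresh current-phase samples $\sum_{h=s+1}^t\sum_{m'} X_h\upbra{m',0}(k)$ and subtracts the cumulative in-phase attack $\sum_{h=s+1}^{t-1}\gamma\upbra{m}(h)$. Telescoping the per-round attack over a phase and over all phases on which $k$ is pulled, the intermediate corrections cancel, leaving a bound of the form $\sum_{s \in \tau(t,k)} \gamma\upbra{m}(s) \le \hat{n}_t(k)\bigl(\Delta(k,K) + \Delta_0 + O(\beta(\hat{n}_t(K)) + \beta(\hat{n}_t(k)))\bigr)$, by the same event-$E$ sandwiching used in \eqref{eq:homo-cost-2}. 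Substituting $\hat{n}_t(k) \le \tfrac{2\beta}{\Delta_0^2}\log T$ into the leading term and into $\hat{n}_t(k)\beta(\hat{n}_t(k))$ for the fluctuation term, then summing over $k < K$, reproduces the two-term bound in the statement.

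The main obstacle I expect is the bookkeeping in the cost telescoping under delayed updates: unlike the single-round setting of \Cref{lemma:homo-cost}, here several consecutive rounds within a phase share the same stale $\hat{\mu}_t(K), \hat{n}_t(K)$, and one must verify that the recursive definition of $\gamma\upbra{m}(t)$ in \eqref{eq:TCOM-2} still telescopes cleanly when the phase boundary splits the history of arm $k$ into a "committed" part (ending at $t'$) and an "in-flight" part (from $s+1$ to the current $t$). I would handle this by examining the last round in each phase at which $\gamma\upbra{m}$ was strictly positive (as in the $\gamma\upbra{m}(t)=0$ case of \Cref{lemma:homo-cost}) and showing that the positive-part truncation $(\cdot)_+$ does not inflate the accumulated cost beyond the $\hat{n}_t(k)$-scaled gap bound. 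Verifying that the delay contributes only lower-order terms absorbed into the $\beta$-fluctuation summand is the one place where genuine care, rather than routine substitution, is required.
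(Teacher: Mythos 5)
Your overall architecture --- the good event $E$, a phase-aware analogue of \Cref{lemma:homo-time}, a telescoping cost lemma handling the $(\cdot)_+$ truncation, then summation over $k<K$ --- is exactly the paper's. The gap is in where the factor $\beta$ enters. You place $\beta$ inside the confidence radius, writing the selection inequality with $\sqrt{2\beta\log t/\hat{n}_t(k)}$ and concluding $\hat{n}_t(k)\le \frac{2\beta}{\Delta_0^2}\log t$ directly. In the paper's proof the UCB index of UCB-TCOM carries no $\beta$ (the comparison at the phase boundary is $\hat{\mu}_s(k)+\sqrt{2\log s/\hat{n}_s(k)}\ge \hat{\mu}_s(K)+\sqrt{2\log s/\hat{n}_s(K)}$); instead, $\beta>1$ is the \emph{phase-growth} parameter: once arm $k$ is committed to at round $s$, all agents keep pulling it until the global count grows from $\hat{n}_s(k)$ to $\hat{n}_r(k)=\beta\,\hat{n}_s(k)$, with no update in between. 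The selection inequality therefore only controls the count at the \emph{start} of the phase, $\hat{n}_s(k)\le\frac{2}{\Delta_0^2}\log s$, and the extra factor $\beta$ in the theorem is precisely the overshoot accumulated before the phase ends. Your version conflates the two mechanisms; if the radius genuinely carried $\beta$ you would still owe the phase overshoot on top of it and land at $\beta^2$ rather than $\beta$. Relatedly, you omit the other half of the paper's first lemma, $\hat{n}_t(k)\le\beta\,\hat{n}_t(K)$, i.e.\ $\hat{n}_t(K)\ge\hat{n}_t(k)/\beta$. In the homogeneous proof one gets $\hat{n}_t(K)\ge\hat{n}_t(k)$ for free and hence $\beta(\hat{n}_t(K))\le\beta(\hat{n}_t(k))$ in the cost lemma; under delayed updates only the weaker $1/\beta$ relation survives, and it is what produces the additional $\beta$-dependence in the fluctuation term of $C(T)$. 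Without that relation your substitution of $\hat{n}_t(k)\,\beta(\hat{n}_t(k))$ for the fluctuation term does not control the $3\beta(\hat{n}_t(K))$ contribution. The cost telescoping itself, including restricting attention to the last round with $\gamma\upbra{m}(t)>0$, matches the paper.
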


\begin{proof}
The proof is similar to \cref{homo-proof} and we use the same notations there. However, some steps should be modified carefully.

\begin{lemma}\label{TCOM-lemma1}
Assume event $E$ holds and $\delta \le 1/2$. For any $k \neq K$ and any $t > K$, we have
\begin{equation}
    \hat{n}_t(k) \le \min\{\beta \hat{n}_t(K), \frac{2\beta \log t}{\Delta_0^2} \}
\end{equation}
\end{lemma}

\begin{proof}
Fix some $t > K$, which satisfies $k_t\upbra{m} = k \neq K$ for all $m \in \mathcal{M}$, and $t$ is in a phase from round $s+1$ to $r$. Also, denote the last phase arm $k$ is pulled is ended at round $t'$. In round $t'$, we have the following inequality by our attack design:
\begin{equation}\label{eq:TCOM-lemma1-1}
    \hat{\mu}_{t'}(k) \le \hat{\mu}_{t'}(K) - 2\beta(\hat{n}_{t'}(K)) -\Delta_0.
\end{equation}
On the other hand, arm $k$ is selected in round $s+1$ because it has the highest UCB in round $s$:
\begin{equation}\label{eq:TCOM-lemma1-2}
    \hat{\mu}_s(k) + \sqrt{\frac{2\log s}{\hat{n}_s(k)}} \ge \hat{\mu}_s(K) + \sqrt{\frac{2\log s}{\hat{n}_s(K)}}.
\end{equation}
Note that $\hat{\mu}_{t'}(k) = \hat{\mu}_s(k)$. Substituting \cref{eq:TCOM-lemma1-1} into \cref{eq:TCOM-lemma1-2}, we have:
\begin{equation}\label{eq:TCOM-lemma1-3}
    \begin{aligned}
        \sqrt{\frac{2\log s}{\hat{n}_s(k)}} - \sqrt{\frac{2\log s}{\hat{n}_s(K)}} & \ge \hat{\mu}_s(K) - \hat{\mu}_s(k) \\
        & \ge \hat{\mu}_s(K) - (\hat{\mu}_{t'}(K) - 2\beta(\hat{n}_{t'}(K)) -\Delta_0)  \\
        & \ge \Delta_0 \\
        & > 0,
    \end{aligned}
\end{equation}
where the third inequality is due to the monotonically decreasing property of $\beta()$. Therefore, 
\begin{equation}\label{eq:TCOM-lemma1-4}
    \hat{n}_t(K) = \hat{n}_s(K) \ge \hat{n}_s(k) = \frac{1}{\beta}\hat{n}_r(k) \ge \frac{1}{\beta}\hat{n}_t(k).
\end{equation}
In addition, as the bonus term is non-negative, we have:
\begin{equation}\label{eq:TCOM-lemma1-5}
    \hat{n}_t(k) \le \hat{n}_r(k) = \beta \hat{n}_s(k) \le \frac{2\beta\log s}{\Delta_0^2} \le \frac{2\beta\log t}{\Delta_0^2}.
\end{equation}
\end{proof}

\begin{lemma}\label{TCOM-lemma2}
Assume event $E$ holds and $\delta \le 1/2$. For any $k \neq K$ and any $t > K$, we have
\begin{equation}
    \sum_{h \in \tau(t,k)} \gamma\upbra{m}(h) \le \hat{n}_{t}(k)(\Delta(k, K) + \Delta_0 + 3 \beta(\hat{n}_t(K)) + \beta(\hat{n}_t(k)))
\end{equation}
\end{lemma}

\begin{proof}
Note that although agents do not update their counters until each phase is over, the attacker does have the latest information thus it can maintain the latest $\hat{\mu}_{t}(k)$ and $\hat{n}_{t}(k)$ in each round $t$ even if it is not the last round of a phase. \cref{eq:TCOM-2} can be written in this form:
\begin{equation}\label{eq:TCOM-lemma2-1}
    \begin{aligned}
         \gamma\upbra{m}(t) = &  (\hat{\mu}_{t'}\upbra{0}(k)\hat{n}_{t'}(k) + \sum_{h=s+1}^t\sum_{m'=1}^MX_{h}\upbra{m',0}(k) - \sum_{h\in \tau(t',k)}\gamma\upbra{m}(h) - \sum_{h=s+1}^{t-1}\gamma\upbra{m}(h)        \\
		 & - (\hat{n}_{t'}(k)+(t-s)M)          (\hat{\mu}_r(K)-2\beta(\hat{n}_r(K))-\Delta_0))_+,
    \end{aligned}
\end{equation}
where $\hat{\mu}_{t'}\upbra{0}(k)$ is the global pre-attack empirical mean of arm $k$ up to round $t'$. Also, as in \cref{homo-proof}, we only consider the round $t$ such that  $\gamma\upbra{m}(t) > 0$. Therefore, we have:
\begin{equation}\label{eq:TCOM-lemma2-2}
    \begin{aligned}
        \sum_{h\in \tau(t,k)}\gamma\upbra{m}(h) & = \hat{\mu}_{t'}\upbra{0}(k)\hat{n}_{t'}(k) + \sum_{h=s+1}^t\sum_{m'=1}^MX_{h}\upbra{m',0}(k) - (\hat{n}_{t'}(k)+(t-s)M)          (\hat{\mu}_r(K)-2\beta(\hat{n}_r(K))-\Delta_0) \\
        & = \hat{n}_t(k)(\hat{\mu}_t^0(k) - (\hat{\mu}_t(K)-2\beta(\hat{n}_t(K))-\Delta_0)) \\
        & \le \hat{n}_t(k)(\Delta(k,K) + \Delta_0 + 3 \beta(\hat{n}_t(K)) + \beta(\hat{n}_t(k))),
    \end{aligned}
\end{equation}
where the last inequality is due to the event $E$.
\end{proof}

With \cref{TCOM-lemma1}, we can easily get that the target arm $K$ is selected for at least $MT - (K-1)(\frac{2\beta}{\Delta_0^2}\log T)$ times. For the cumulative cost, we use \cref{TCOM-lemma2}, and sum over all non-target arms. Also, it is easy to get $\beta(\hat{n}_t(K)) \le \beta(\frac{1}{\beta}\hat{n}_t(k))$ as $\beta()$ is a monotonically decreasing function. Therefore,
\begin{equation}\label{equ:TCOM_Thmproof}
    \begin{aligned}
        \sum_{t = 1}^{T} \gamma\upbra{m}(t) & \le \sum_{k = 1}^{K-1}\hat{n}_t(k)(\Delta(k,K)+\Delta_0) + 4\beta\sum_{k = 1}^{K-1} \hat{n}_t(k) \beta(\frac{1}{\beta}\hat{n}_t(k))\\
        & \le \left(\frac{2\beta}{\Delta_0^2} \log T\right)\sum_{k<K}(\Delta(k,K) + \Delta_0)   + \frac{8(K-1)\beta^2\sigma}{\Delta_0^2}\sqrt{\log T\log\frac{4K\pi^2(\log T)^2}{3\delta\Delta_0^4}}.
    \end{aligned}
\end{equation}
\end{proof}

\subsection{Attacks against Leader-follower Algorithm}
In contrast to fully distributed algorithms, there exists a server, or leader (agent) in leader-follower algorithms, which has a pivotal role in exploration. On the other hand, the followers always undertake exploitation. In this section, we consider attacks on a representative leader-follower algorithm, the DPE2 algorithm, proposed by ~\citep{wang2020optimal}. We show that our attack algorithm against fully distributed algorithms can be extended to these leader-follower algorithms.

We first introduce some new notations to differentiate between leader-follower algorithms and fully distributed algorithms. Without loss of generality, let agent $1$ be the leader of the system. Let $\hat{N}_t(k)$ denote the times arm $k$ is selected up to time slot $t$, and $\hat{V}_t(k)$ be the post-attack empirical mean associated with $\hat{N}_t(k)$. For the ease of presentation, we consider the UCB1 induces instead of the KL-UCB induces, and define $D_t(k) := \hat{V}_t(k) + \sqrt{\frac{\alpha\log t}{2\hat{N}_t(k)}}$ as the UCB. The leader explores different arms by maintaining a list $C(t)$ which contains the suboptimal arms whose upper bounds are larger than the empirical mean of what it considers to be the optimal arm. Similar to the UCB-TCOM algorithm, the information is not updated immediately after each round. We also define the phase in the process. When $C(s-1)=\emptyset$ and $C(s)\neq\emptyset$, we say the phase begins at round $s$; and when $C(r-1)\neq\emptyset$ and $C(r)=\emptyset$, we say the phase ends at round $r-1$. As the design of DPE2, the information of all arms will be updated in round $r$. We then introduce our attack algorithm.

As the followers always select the arm which the leader considers as the best, we only need to misguide the leader to regard the target arm $K$ as the optimal arm. Therefore, we only need to attack the leader. Assume $k_t\upbra{1} =k \neq K$, and $t$ belongs to a phase from round $s$ to $r-1$. Our attacks make sure:
\begin{equation}\label{eq:l-f-attack}
    \hat{V}_r(k) \le \hat{V}_s(K) - 2\beta(\hat{N}_s(K)) - \Delta_0,
\end{equation}
where $\hat{V}_r(k) = \frac{\hat{V}_s(k)\hat{N}_s(k) + X_t\upbra{1,0}(k)-\gamma(t)}{\hat{N}_r(k)}$, and the attack value is $\gamma(t)$. The details are described in \cref{alg:DPE2}.

\begin{algorithm}[t]
	\caption{Attack against DPE2 (Leader)}\label{alg:DPE2}
	\begin{algorithmic}[1]
		\State \textbf{Initialization}: $\hat{V}_t(k) = \hat{N}_t(k) = D_t(k) = 0$ for all $k \in [K]$, $C(t)=\emptyset$
		\For{$t = 1,2,3,\dots,T$}
		\State Attacker observes that agent selects $k_t\upbra{1}$
		\State Environment reveals reward $X_t\upbra{1,0}(k_t\upbra{1})$
		\If {$k_t\upbra{1} \neq K$}
		\State Attacker manipulates reward $X_t\upbra{1}(k_t\upbra{1}) = X_t\upbra{1,0}(k_t\upbra{1}) - \gamma(t)$ according to \cref{eq:l-f-attack}
		\EndIf
		\EndFor
	\end{algorithmic}
\end{algorithm}

\begin{theorem}\label{thm:DPE2}
	Suppose $T > T_0$, and $\delta < 1/2$. With probability at least $1 - \delta$, \Cref{alg:DPE2} misguides the DPE2 algorithm to choose the target arm $K$ at least
	$M(T - K) - (K-1)\left(\frac{\alpha}{2\Delta_0^2} \log T + 1\right)$ rounds, using a a cumulative attack cost at most
	\begin{equation*}\textstyle
		\begin{aligned}
			C(T) \le
			 & \left(\frac{\alpha}{2\Delta_0^2} \log T + 1\right)\sum_{k<K}(\Delta(k,K) + \Delta_0)                               \\
			 & + 4(K-1)\sigma\sqrt{2(\frac{\alpha}{2\Delta_0^2} \log T + 1)\log(\frac{K\pi^2}{3\delta}(\frac{\alpha}{2\Delta_0^2} \log T + 1)^2)},
		\end{aligned}
	\end{equation*}
    where $T_0/\log(T_0) = K\lceil\frac{\alpha}{2\Delta_0^2}+1\rceil$.
\end{theorem}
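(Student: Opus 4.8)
The plan is to mirror the two-lemma structure used for CO-UCB (\Cref{thm:homo}) and its phase-aware adaptation for UCB-TCOM (\Cref{thm:TCOM}), since DPE2 shares the phased, delayed-update feature of UCB-TCOM while restricting exploration to the leader. First I would fix the good event $E = \{\forall k, \forall t > K : |\hat{V}_t\upbra{0}(k) - \mu(k)| < \beta(\hat{N}_t(k))\}$, where $\hat{V}_t\upbra{0}$ is the pre-attack global empirical mean, and invoke Hoeffding's inequality to get $\P(E) \ge 1-\delta$; all subsequent arguments condition on $E$. The key structural fact I would establish up front is that, because we never tamper with samples of arm $K$, its empirical mean stays unbiased and, by \cref{eq:l-f-attack} together with $E$, remains above the UCB of every other arm, so the leader persistently regards $K$ as its optimal arm and every follower exploits $K$.

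The first lemma bounds the exploration of each non-target arm: on $E$, for $k \neq K$ and $t > K$, $\hat{N}_t(k) \le \frac{\alpha}{2\Delta_0^2}\log t + 1$. To prove it I would take the last phase, beginning at round $s$, in which $k$ enters the exploration list $C(\cdot)$; membership in $C(s)$ forces $\hat{V}_s(k) + \sqrt{\alpha \log s / (2\hat{N}_s(k))} \ge \hat{V}_s(K)$, while the attack imposed at the end of the previous phase containing $k$ gives $\hat{V}_s(k) = \hat{V}_{t'}(k) \le \hat{V}_{t'}(K) - 2\beta(\hat{N}_{t'}(K)) - \Delta_0$. Combining these and using $E$ with the monotonicity of $\beta$ to compare $\hat{V}_{t'}(K)$ and $\hat{V}_s(K)$ collapses the inequality to $\sqrt{\alpha \log s/(2\hat{N}_s(k))} \ge \Delta_0$, i.e. $\hat{N}_s(k) \le \frac{\alpha}{2\Delta_0^2}\log s$; the extra $+1$ absorbs the single additional pull incurred within the phase before the counters refresh. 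This step is the exact analog of \Cref{lemma:homo-time}/\Cref{TCOM-lemma1} but for the list-based exploration rule.

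The second lemma bounds the per-arm cost exactly as in \Cref{lemma:homo-cost}/\Cref{TCOM-lemma2}: on $E$, the total attack value spent on arm $k$ is at most $\hat{N}_t(k)\big(\Delta(k,K) + \Delta_0 + 3\beta(\hat{N}_t(K)) + \beta(\hat{N}_t(k))\big)$. I would obtain this by writing $\gamma(t)$ explicitly from \cref{eq:l-f-attack}, telescoping over the rounds in which $k$ was pulled (replacing the post-attack mean by the pre-attack one plus a sum of previous $\gamma$'s, exactly as in \cref{eq:TCOM-lemma2-1}), and bounding the resulting empirical means by $\mu$-terms via $E$. Finally I would combine the two lemmas: the first gives $\sum_{k \neq K}\hat{N}_T(k) \le (K-1)(\frac{\alpha}{2\Delta_0^2}\log T + 1)$, yielding the stated lower bound on the pulls of $K$ after subtracting the $MK$ pulls spent on the length-$K$ initialization; substituting $\hat{N}_T(k) \le \frac{\alpha}{2\Delta_0^2}\log T + 1$ into the second lemma and into $\beta(\cdot)$ then produces the cost bound, with $T_0$ chosen via $T_0/\log T_0 = K\lceil \frac{\alpha}{2\Delta_0^2}+1\rceil$ so that all non-target exploration has terminated and the leader has locked onto $K$.

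The main obstacle I anticipate is the phased, delayed-update bookkeeping peculiar to DPE2: the quantities in the attack rule \cref{eq:l-f-attack} are end-of-phase ($\hat{V}_r, \hat{V}_s$) whereas the leader's within-phase decisions use stale counters, so I must carefully track which of $\hat{N}_s, \hat{N}_r, \hat{N}_t$ and $\hat{V}_s, \hat{V}_r, \hat{V}_t$ governs each inequality and confirm that depressing arm $k$'s end-of-phase mean genuinely keeps it out of $C(\cdot)$ at the next phase boundary. A secondary subtlety is verifying that attacking the leader alone suffices: I must argue that the followers deterministically copy the leader's optimal-arm estimate, so no follower observation can rehabilitate a suppressed arm, which is precisely what makes the cost independent of the number of followers.
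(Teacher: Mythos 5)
Your proposal follows essentially the same route as the paper's proof: the same good event, the same two lemmas (a phase-aware bound $\hat{N}_t(k)\le \frac{\alpha}{2\Delta_0^2}\log t+1$ obtained by combining the end-of-previous-phase attack condition with the exploration-list membership inequality at the start of the current phase, and a telescoping per-arm cost bound of the form $\hat{N}_t(k)(\Delta(k,K)+\Delta_0+3\beta(\hat{N}_t(K))+\beta(\hat{N}_t(k)))$), combined in the same way, with the same observation that followers deterministically copy the leader's optimal-arm estimate so attacking the leader alone suffices. The subtleties you flag (delayed counters across phase boundaries, the $+1$ absorbing the within-phase pull) are exactly the ones the paper handles.
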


\begin{proof}
The proof is similar to \cref{homo-proof} as well.

\begin{lemma}\label{DPE2-lemma1}
Assume event $E$ holds and $\delta \le 1/2$. For any $k \neq K$ and any $t > T_0$, we have
\begin{equation}\nonumber
    \hat{N}_t(k) \le \frac{\alpha}{2\Delta_0^2}\log t + 1
\end{equation}
\end{lemma}

\begin{proof}
Fix some $t > T_0 \ge K$, which satisfies $k_t\upbra{1} = k \neq K$, and $t$ is in a phase from round $s$ to $r-1$. Also, assume the last phase arm $k$ is pulled from round $s'$ to $r'-1$. In round $r'$, we have the following inequality by the design of our attacks:
\begin{equation}\label{eq:DPE2-lemma1-1}
    \hat{V}_{r'}(k) \le \hat{V}_{s'}(K) - 2\beta(\hat{N}_{s'}(K)) -\Delta_0.
\end{equation}
On the other hand, arm $k$ is selected in round $s$ because the following inequality holds in round $s$:
\begin{equation}\label{eq:DPE2-lemma1-2}
    \hat{V}_s(k) + \sqrt{\frac{\alpha\log s}{\hat{N}_s(k)}} \ge \hat{V}_s(K).
\end{equation}
Note that $\hat{V}_{s}(k) = \hat{V}_{r'}(k)$. Substituting \cref{eq:DPE2-lemma1-1} into \cref{eq:DPE2-lemma1-2}, we have:
\begin{equation}\label{eq:DPE2-lemma1-3}
    \begin{aligned}
        \sqrt{\frac{\alpha\log s}{\hat{N}_s(k)}} & \ge \hat{V}_s(K) - \hat{V}_s(k) \\
        & \ge \hat{V}_s(K) - (\hat{V}_{s'}(K) - 2\beta(\hat{N}_{s'}(K)) -\Delta_0)  \\
        & \ge \Delta_0,
    \end{aligned}
\end{equation}
where the third inequality is due to the monotonically decreasing property of $\beta$. Therefore, this ends the proof as $\hat{N}_t(k) \le \hat{N}_s(k) + 1$.
\end{proof}

\begin{lemma}\label{DPE2-lemma2}
Assume event $E$ holds and $\delta \le 1/2$. For any $k \neq K$ and any $t > K$, we have
\begin{equation}
    \sum_{h \in \tau(t,k)} \gamma(h) \le \hat{N}_{t}(k)(\Delta(k, K) + \Delta_0 + 3 \beta(\hat{N}_t(K)) + \beta(\hat{N}_t(k)))
\end{equation}
\end{lemma}

\begin{proof}
Similar to \cref{eq:TCOM-lemma2-1}, the attack value can be written in this form:
\begin{equation}\label{eq:DPE2-lemma2-1}
    \begin{aligned}
         \gamma(t) = &  \left(\hat{V}_{s}\upbra{0}(k)\hat{N}_{s}(k) + X_{t}\upbra{1,0}(k) - \sum_{h\in \tau(s,k)}\gamma(h)  - (\hat{N}_{s}(k)+1)          (\hat{V}_s(K)-2\beta(\hat{N}_s(K))-\Delta_0)\right)_+,
    \end{aligned}
\end{equation}
where $\hat{V}_{s}\upbra{0}(k)$ is the global pre-attack empirical mean of arm $k$ up to round $s$ (for the leader). Also, as in \cref{homo-proof}, we only consider the round $t$ such that  $\gamma(t) > 0$. Therefore, we have:
\begin{equation}\label{eq:DPE2-lemma2-2}
    \begin{aligned}
        \sum_{h\in \tau(t,k)}\gamma(h) & = \hat{V}_{s}\upbra{0}(k)\hat{N}_{s}(k) + X_{t}\upbra{1,0}(k) - (\hat{N}_{s}(k)+1)          (\hat{V}_s(K)-2\beta(\hat{N}_s(K))-\Delta_0) \\
        & = \hat{N}_t(k)(\hat{V}_t^0(k) - (\hat{V}_t(K)-2\beta(\hat{N}_t(K))-\Delta_0)) \\
        & \le \hat{N}_t(k)(\Delta(k,K) + \Delta_0 + 3 \beta(\hat{N}_t(K)) + \beta(\hat{N}_t(k))),
    \end{aligned}
\end{equation}
where the last inequality is due to the event $E$.
\end{proof}

With \cref{DPE2-lemma1}, we can easily get that the target arm $K$ is selected for at least $M(T - K) - (K-1)\left(\frac{\alpha}{2\Delta_0^2} \log T + 1\right)$ times because in the beginning, followers randomly select an arm to pull, and after that, $K$ will be the optimal arm for the leader after each phase, so that followers won't select arms other than $K$. As for the cumulative cost, we introduce \cref{DPE2-lemma2}, and sum over all non-target arms. Also, it is easy to get $\beta(\hat{N}_t(K)) \le \beta(\hat{N}_t(k))$ as $\beta$ is a monotonically decreasing function and $T> T_0$, which means $\hat{N}_t(k) \le \hat{N}_t(K)$ holds for any $k \neq K$. Therefore,
\begin{equation}\label{equ:DPE2_Thmproof}
    \begin{aligned}
        \sum_{t = 1}^{T} \gamma(t) & \le \sum_{k = 1}^{K-1}\hat{N}_t(k)(\Delta(k,K)+\Delta_0) + 4\beta\sum_{k = 1}^{K-1} \hat{N}_t(k) \beta(\hat{N}_t(k))\\
        & \le \left(\frac{\alpha}{2\Delta_0^2} \log T + 1\right)\sum_{k<K}(\Delta(k,K) + \Delta_0) \\
        & + 4(K-1)\sigma\sqrt{2(\frac{\alpha}{2\Delta_0^2} \log T + 1)\log(\frac{K\pi^2}{3\delta}(\frac{\alpha}{2\Delta_0^2} \log T + 1)^2)}.
    \end{aligned}
\end{equation}
\end{proof}

\subsection{Additional Experiments}
\begin{figure}[t]
    \centering
    \begin{subfigure}[b]{0.28\textwidth}
        \centering
        \includegraphics[width=\textwidth]{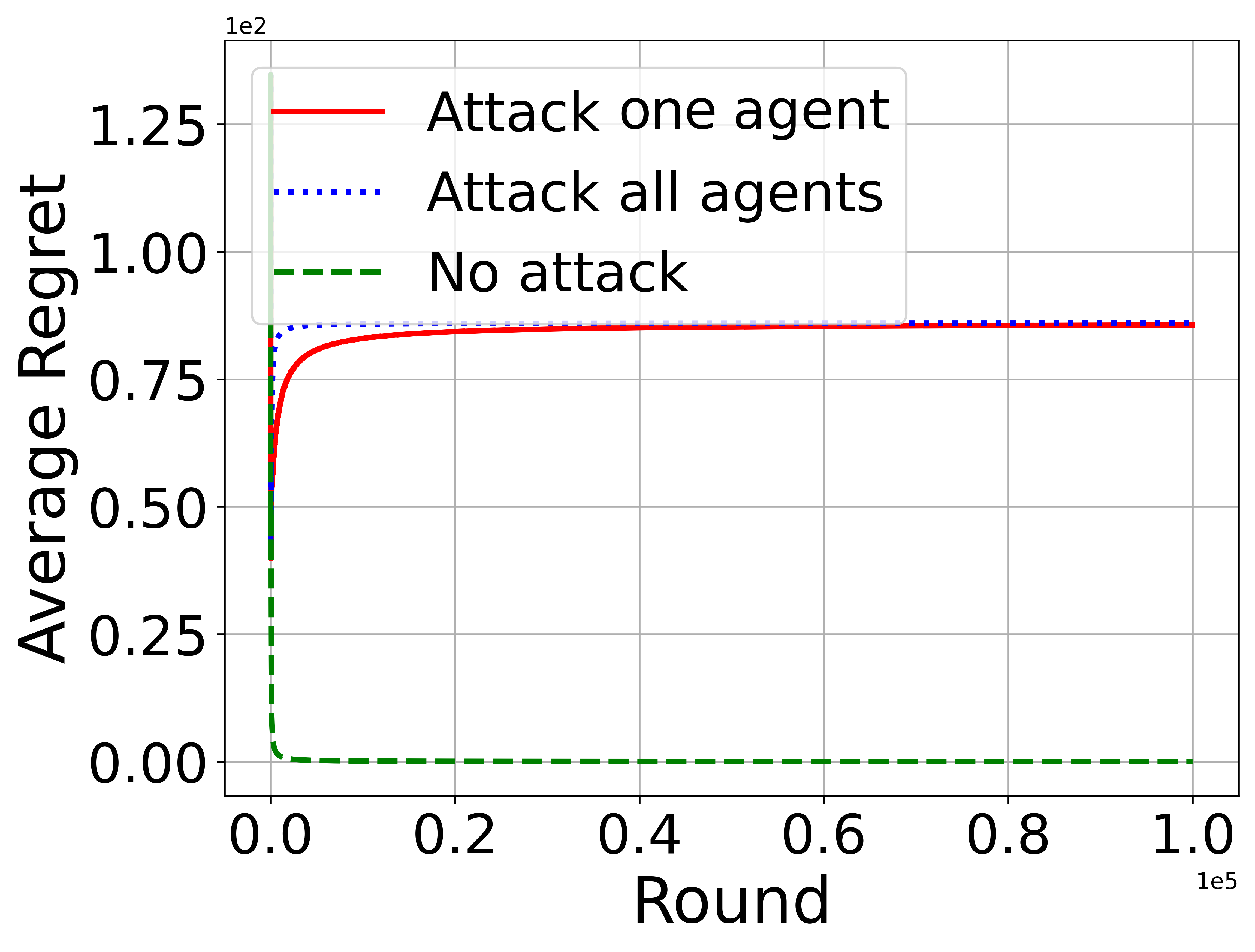}
        \caption{Average Regrets}
        \label{fig:IC-Regret}
    \end{subfigure}
    \begin{subfigure}[b]{0.28\textwidth}
        \centering
        \includegraphics[width=\textwidth]{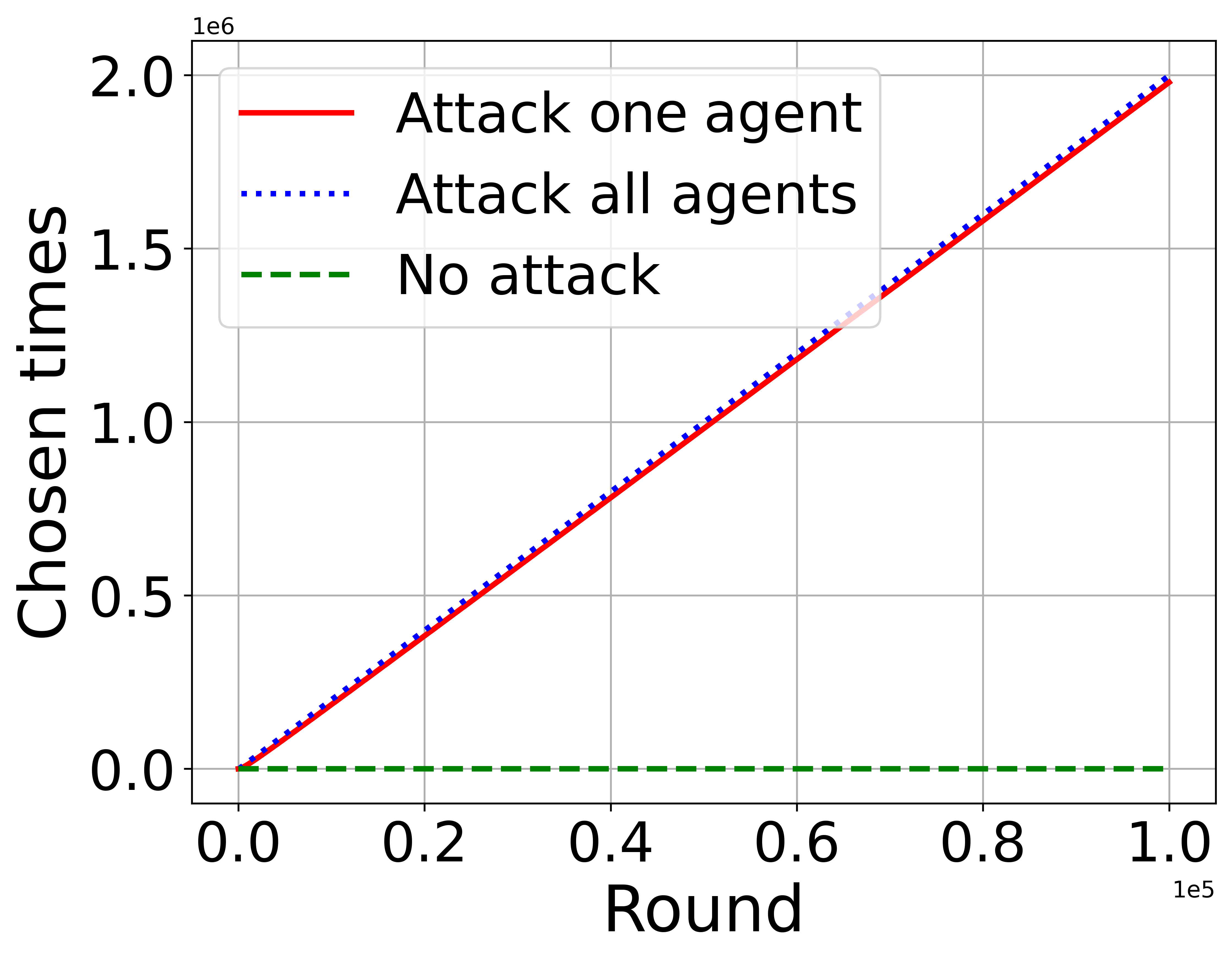}
        \caption{Chosen times}
        \label{fig:IC-Times}
    \end{subfigure}
    \begin{subfigure}[b]{0.28\textwidth}
        \centering
        \includegraphics[width=\textwidth]{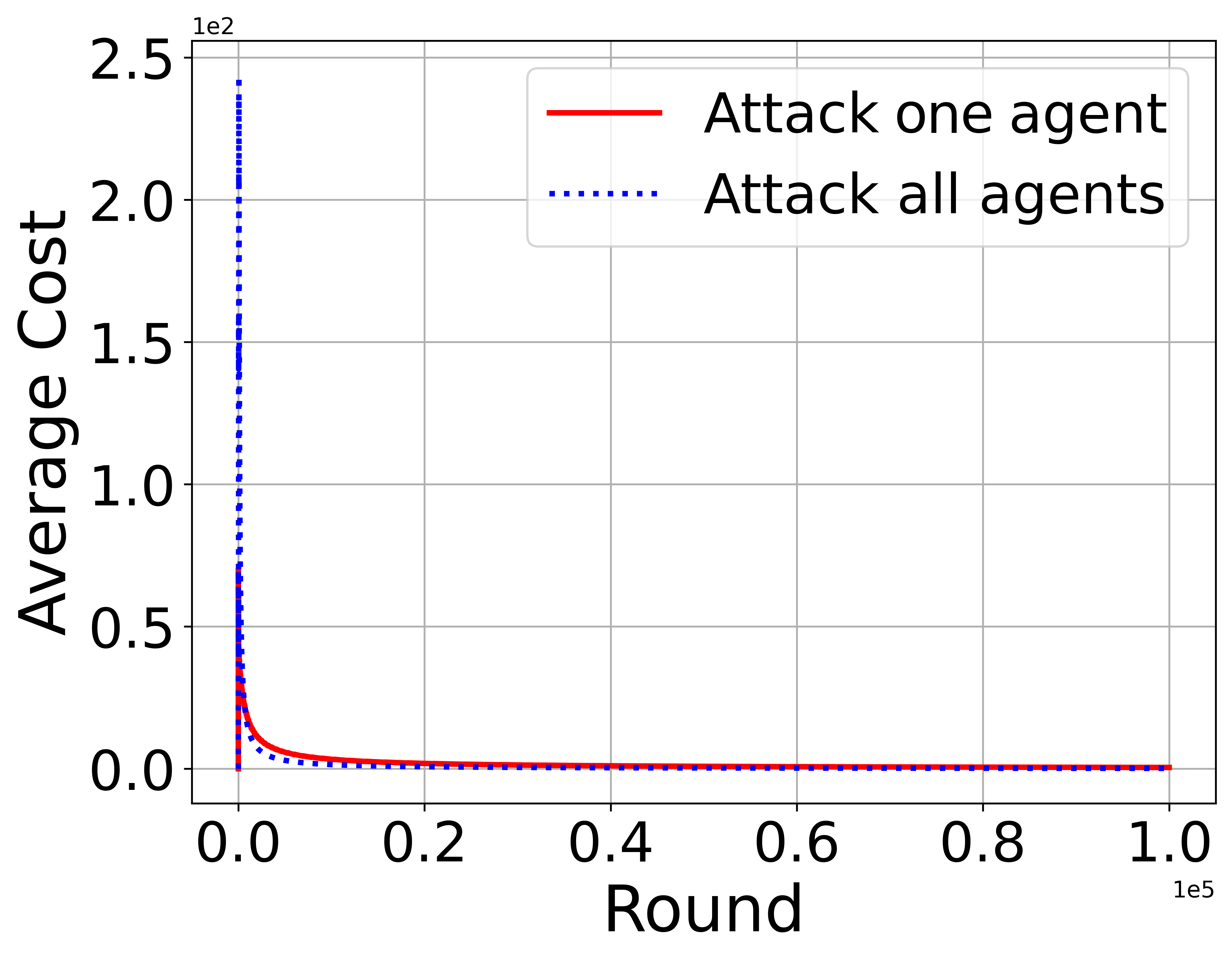}
        \caption{Average Costs}
        \label{fig:IC-Cost}
    \end{subfigure}
    \caption{Attacks against UCB.}
    \label{fig:UCB-IC}
\end{figure}
In this section, we show the experimental results of attacks against the CO-UCB algorithm in homogeneous settings. We set $T = 100,000,K = 20, M =20$. The distributions of arms are the same as those in \cref{sec:exp}. CO-UCB takes $\alpha = 4$, and attack parameters are set to $\Delta_0 = 0.1$, and $\delta = 0.1$. We compare our algorithm, which only attacks one agent, with two baselines: the first one attacks all agents using attack values computed by \cref{equ:IC-inequality} for each agent; the second one is the original CO-UCB algorithm without attacks. Each experiment was repeated for $10$ times.

% In \cref{fig:IC-Regret,fig:IC-Times}, we observe that the CO-UCB algorithm exhibits sublinear regret, as its regret curve converges to $0$ and it rarely selects the suboptimal target arm $K$. In contrast, both attack strategies mislead the CO-UCB algorithm into choosing our target arm linearly, as indicated in \cref{fig:IC-Times}. Consequently, the regret becomes linear, as evidenced by the quick slope decline in \cref{fig:IC-Regret}. Furthermore, the difference between these two curves in both figures is negligible. This indicates that, despite our attack design targeting only one agent, it yields similar results to the strategy of attacking all agents. In \cref{fig:IC-Cost}, we can see that the cumulative attack cost of the two attack strategies is nearly identical. Their curves both converge to $0$, which indicates that the costs are both sublinear. This outcome underscores the effectiveness of our attack design since it affects only one agent instead of all $20$ agents, while the cumulative cost remains almost unchanged.

In Figures \cref{fig:IC-Regret} and \cref{fig:IC-Times}, the CO-UCB algorithm displays sublinear regret; the curve showcasing its regret gradually approaches $0$, and the algorithm seldom opts for the suboptimal target arm $K$. However, both attack strategies successfully misguide the CO-UCB algorithm, leading it to consistently select our target arm, as shown in \cref{fig:IC-Times}. This causes linear regrets, as highlighted in \cref{fig:IC-Regret}. 
Notably, the divergence between these attack algorithms in both figures is minimal, suggesting that even though our attack is designed to target just one agent, it is almost as effective as a strategy targeting all agents. 
Looking at \cref{fig:IC-Cost}, the cumulative attack costs for both strategies are nearly indistinguishable, converging towards $0$. This points to sublinear costs for both. Such a finding amplifies the effectiveness of our attack design: despite focusing on a single agent, the cumulative cost is nearly identical to an approach targeting all $20$ agents.

\end{document}